\newcommand*{\PRINTAPPENDIX}{}
\newcommand*{\ARXIV}{} 
\newtheorem{corollary}{Corollary}
\newtheorem{definition}{Definition}
\newtheorem{remark}{Remark}
\newtheorem{statement}{Statement}
\def\defeq{\triangleq}
\DeclareMathOperator{\expect}{\mathbb{E}} 
\DeclareMathOperator{\cov}{\textnormal{cov}}   
\DeclareMathOperator{\supp}{\textnormal{supp}} 
\DeclareMathOperator{\range}{\textnormal{ran}} 
\DeclareMathOperator*{\argmax}{\arg\max}
\DeclareMathOperator{\diag}{diag}
\def\naturals{\mathbb{N}}
\def\reals{\mathbb{R}}
\def\normal{\mathcal{N}}
\newcommand\undermat[2]{%
    \makebox[0pt][l]{$\smash{\underbrace{\phantom{%
        \begin{matrix}#2\end{matrix}}}_{\text{$#1$}}}$
    }#2%
}
\title{Information Bottleneck Analysis of Deep Neural Networks via Lossy Compression}
\author{
    \textbf{Ivan Butakov\thanks{Correspondence to \href{mailto:butakov.id@phystech.edu}{butakov.id@phystech.edu}
    }$^{*,1,2,3}$, Alexander Tolmachev$^{1,2}$, Sofia Malanchuk$^{1,2}$, Anna Neopryatnaya$^{1,2}$, } \\
\textbf{Alexey Frolov$^{1}$, Kirill Andreev$^{1}$} \\
$^1$Skolkovo Institute of Science and Technology;
$^2$Moscow Institute of Physics and Technology; \\
$^3$Sirius University of Science and Technology;
\\
\small\texttt{\{butakov.id,tolmachev.ad,malanchuk.sv,neopryatnaya.am\}@phystech.edu,} \\
\small\texttt{\{al.frolov,k.andreev\}@skoltech.ru}
}
\begin{document}

\maketitle

\begin{abstract}
The Information Bottleneck (IB) principle offers an information-theoretic framework for analyzing the training process of deep neural networks (DNNs).
Its essence lies in tracking the dynamics of two mutual information (MI) values:
between the hidden layer output and the DNN input/target.
According to the hypothesis put forth by~\citet{shwartz_ziv2017opening_black_box},
the training process consists of two distinct phases: fitting and compression.
The latter phase is believed to account for the good generalization performance exhibited by DNNs.
Due to the challenging nature of estimating MI between high-dimensional random vectors,
this hypothesis was only partially verified for NNs of tiny sizes or specific types, such as quantized NNs.
In this paper, we introduce a framework for conducting IB analysis of general NNs.
Our approach leverages the stochastic NN method proposed by~\cite{goldfeld2019estimating_information_flow}
and incorporates a compression step to overcome the obstacles associated with high dimensionality.
In other words, we estimate the MI between the compressed representations of high-dimensional random vectors.
The proposed method is supported by both theoretical and practical justifications.
Notably, we demonstrate the accuracy of our estimator through synthetic experiments featuring predefined MI values and comparison with MINE~\citep{belghazi2018mine}.
Finally, we perform IB analysis on a close-to-real-scale convolutional DNN, which reveals new features of the MI dynamics.
\end{abstract}

\section{Introduction}
\label{sec:introduction}


The information-theoretic analysis of deep neural networks (DNNs) is a developing branch of the deep learning theory,
which may provide a robust and interpretable way to measure the performance of deep models during training and inference.
This type of analysis might complement current non-transparent meta-optimization algorithms for architecture search,
like ENAS~\citep{pham2018efficient_architecture_search}, DARTS~\citep{liu2018darts, wu2019fbnet, wa2020using_darts},
evolutionary algorithms~\citep{fan2020evolutionary}, and others.
This method may also provide new approaches to explainable AI via estimation of information flows in
NNs~\citep{tishby2015bottleneck_principle, xu2017IT_analysis, goldfeld2019estimating_information_flow, abernethy2020reasoning_conditional_MI, kairen2018individual_neurons}
or via independence testing~\citep{berrett2017independence_testing, sen2017conditional_independence_test},
as opposed to existing methods of local analysis of a model~\citep{LIME, springenberg2015striving, Grad-CAM, ivanovs2021perturbation}
or methods based on complex manipulations with data~\citep{lipovetsky2001shap, strubelj2013features_contributions, datta2016QII}.
Information-theoretic quantities can also be considered as regularization terms or
training objectives~\citep{tishby2015bottleneck_principle, chen2016infogan, belghazi2018mine}.

The information-theoretic analysis of DNNs relies on the \emph{Information Bottleneck} (IB) principle
proposed in~\citet{tishby1999information}.
This concept was later developed in~\citet{tishby2015bottleneck_principle} and applied to
DNNs in~\citet{shwartz_ziv2017opening_black_box}.
The major idea of the IB approach is to track the dynamics of two \emph{mutual information} (MI) values:
$I(X; L)$ between the hidden layer output ($L$) and the DNN input ($X$)
and $I(Y; L)$ between the hidden layer output and the target of the model ($Y$).
As a result of the IB analysis, the authors of the latter article
put forth the so-called \emph{fitting-compression} hypothesis, which states that the training process consists of two phases:
a feature-extraction ``fitting'' phase (both MI values grow)
and a representation compression phase ($I(Y; L)$ grows while $I(X; L)$ decreases).
The authors conjectured the compression phase to account for the good generalization performance exhibited by DNNs.
However, it is still debated whether empirical confirmations of the compression phase
are related to improper mutual information estimators, activation function choice, or other implementation details.
For a more complete overview of current IB-related problems, we refer the reader to~\citet{geiger2022IP_overview}.

In the original work by~\citet{shwartz_ziv2017opening_black_box}, a quantization (or binning) approach was proposed to estimate MI.
However, this approach encountered two primary challenges.
Firstly, the MI estimate was highly sensitive to the bin size selection.
Secondly, for a fixed training epoch, when the training weights are held constant,
$L$ becomes a deterministic function of $X$, resulting in the MI being independent of the DNN parameters
(and infinite for practically all regimes of interest if we speak about continuous case and reasonable activation functions,
see e.g., \citet{amjad2018learning_representations}).
The subsequent papers addressed the aforementioned problems.
To tackle the infinite MI problem it was proposed to consider
(a) stochastic NNs~\citep{goldfeld2019estimating_information_flow, tang2019markov_IB, adilova2023IP_dropout},
(b) quantized NNs~\citep{lorenzen2022IB_quantized} or (c) a mixture of them~\citep{cerrato2023stochastically_quantized_NN}.
Simple and inconsistent binning entropy estimators have been replaced with estimators more appropriate for
continuous random variables~\citep{gabrie2018entropy_and_MI_in_DNN, goldfeld2019estimating_information_flow, goldfeld2020convergence_of_SEM_entropy_estimation, adilova2023IP_dropout}.

However, the high-dimensional problem still holds, as the sample complexity (the least number of samples required for an estimation within a fixed additive gap) of any entropy estimator is proven to depend on the dimension exponentially
\citep{goldfeld2020convergence_of_SEM_entropy_estimation, mcallester2020limitations_MI}.
Due to the challenging nature of estimating MI between high-dimensional random vectors,
the fitting-compression hypothesis has only been verified for tiny NNs or special classes of models with
tractable information-theoretic quantities~(e.g., \citet{gabrie2018entropy_and_MI_in_DNN, lorenzen2022IB_quantized}).
Some existing works on IB-analysis of large networks also exhibit signs
of the curse of dimensionality~\citep{goldfeld2019estimating_information_flow, adilova2023IP_dropout}.
We mention papers that suggest using lower bounds or other
surrogate objectives~\citep{belghazi2018mine, elad2019direct_validation_IB, poole2019on_variational_bounds_MI, darlow2020information_rsnet, jonsson2020convergence_DNN_MI, kirsch2021unpacking_IB, mcallester2020limitations_MI}, advanced binning~\citep{noshad2019EDGE} or even other definitions of entropy~\citep{wickstrom2019matrix_entropy, yu2021understanding_cnn_with_information_theory}
in order to perform IB-analysis of large networks.
It may be assumed that these methods can partially overcome the curse of dimensionality
via utilizing the internal data structure implicitly,
or simply from the fact that non-conventional information theory might be less prone to the curse of dimensionality.

In contrast to the approaches mentioned above, we propose a solution to the curse of dimensionality problem by explicitly compressing the data.
Since most datasets exhibit internal structure (according to the \emph{manifold hypothesis}~\citep{fefferman2013testing_manifold_hypothesis}),
it is usually sufficient to estimate information-theoretic quantities using compressed or latent representations of the data.
This enables the application of conventional and well-established information-theoretic approaches to real-world machine learning problems.
In the recent work of \citet{butakov2021high_dimensional_entropy_estimation},
the compression was used to obtain the upper bound of the random vector entropy.
However, it is necessary to precisely estimate or at least bound from both sides the entropy alternation under compression in order to derive the MI estimate.
In the work of \cite{kristjan2023smoothed_entropy_PCA}, two-sided bounds are obtained, but only in the special case of linear compression and smoothed distributions.
Our work heavily extends these ideas by providing new theoretical statements and experimental results for MI estimation via compression-based entropy estimation.
We stress out the limitations of the previous approaches more thoroughly in the Appendix.

Our contribution is as follows.
We introduce a comprehensive framework for conducting IB analysis of general NNs.
Our approach leverages the stochastic NN method proposed in~\citet{goldfeld2019estimating_information_flow}
and incorporates a compression step to overcome the obstacles associated with high dimensionality.
In other words, we estimate the MI between the compressed representations of high-dimensional random vectors.
We provide a theoretical justification of MI estimation under lossless and lossy compression.
The accuracy of our estimator is demonstrated  through synthetic experiments featuring predefined MI values and comparison with MINE~\citep{belghazi2018mine}.
Finally, the experiment with convolutional DNN classifier of the MNIST handwritten digits dataset~\citep{lecun2010mnist} is performed.
The experiment shows that there may be several compression/fitting phases during the training process.
It may be concluded that phases revealed by information plane plots are connected to different regimes of learning
(i.e. accelerated, stationary, or decelerated drop of loss function).

It is important to note that stochastic NNs serve as proxies for analyzing real NNs.
This is because injecting small amounts of noise have negligible effects on outputs of layers,
and the introduced randomness allows for reasonable estimation of information-theoretic quantities that depend on NN parameters.
We also mention that injecting noise during training is proven to enhance performance and generalization capabilities~\citep{geoffrey2012improving_NN, srivastava14simple_way}.

The paper is organized as follows.
In Section~\ref{sec:setup_and_preliminaries}, we provide the necessary background and introduce the key concepts used throughout the paper.
Section~\ref{sec:mutual_information_estimation_lossy} describes our proposed approach for estimating mutual information under compression,
along with theoretical justifications and bounds. In Section~\ref{sec:synthetic},
we develop a general framework for testing mutual information estimators on synthetic datasets.
This framework is utilized in Section~\ref{sec:comparison} to evaluate MINE and four selected mutual information estimators,
complemented by the proposed compression step.
The best-performing method is then applied in Section~\ref{sec:neural} to perform information plane analysis on a convolutional NN classifier trained on the MNIST dataset.
Finally, the results are discussed in Section~\ref{sec:discussion}.
We provide all the proofs in the Appendix, as well as discussion of state-of-the-art methods other than MINE.

\section{Preliminaries}
\label{sec:setup_and_preliminaries}

Consider random vectors, denoted as $X: \Omega \rightarrow \mathbb{R}^n$ and $Y: \Omega \rightarrow \mathbb{R}^m$,
where $\Omega$ represents the sample space.
Let us assume that these random vectors have absolutely continuous probability density functions (PDF)
denoted as $\rho(x)$, $\rho(y)$, and $\rho(x, y)$, respectively, where the latter refers to the joint PDF.
The differential entropy of $X$ is defined as follows
\[
    h(X) = -\expect \log \rho(x) = - \int\limits_{\mathclap{\supp{X}}} \rho(x) \log \rho(x) \, d x,
\]
where $\supp{X}\subseteq \mathbb{R}^{n}$ represents the \emph{support} of $X$, and $\log(\cdot)$ denotes the natural logarithm.
Similarly, we define the joint differential entropy as $h(X, Y) = -\expect \log \rho(x, y)$
and conditional differential entropy as $h(X \mid Y) = -\expect \log \rho\left(X \middle| Y\right) = - \expect_{Y} \left(\expect_{X \mid Y = y} \log \rho(X \mid Y = y) \right)$.
Finally, the mutual information (MI) is given by $I(X; Y) = h(X) - h(X \mid Y)$,
and the following equivalences hold
\begin{equation}
    \label{eq:mutual_information_from_conditional_entropy}
    I(X; Y) = h(X) - h(X\mid Y) = h(Y) - h(Y \mid X),
\end{equation}
\begin{equation}
    \label{eq:mutual_information_from_joined_entropy}
    I(X; Y) = h(X) + h(Y) - h(X, Y).
\end{equation}

Note that $ \supp X $ or $ \supp Y $ may have measure zero and be uncountable, indicating a singular distribution.
In such cases, if the supports are manifolds, PDFs can be treated as induced probability densities,
and $ dx $ and $ dy $ can be seen as area elements of the corresponding manifolds.
Hence, all the previous definitions remain valid.


In the following discussion, we make use of an important property of MI,
which is its invariance under non-singular mappings between smooth manifolds.
In the next statement we show that the MI can be measured between compressed representations of random vectors.
\begin{statementE}
    \label{statement:MI_under_nonsingular_mappings}
    Let $ \xi \colon \Omega \rightarrow \reals^{n'} $ be an absolutely continuous random vector, and let
    $ f \colon \mathbb{R}^{n'} \rightarrow \reals^n $ be an injective piecewise-smooth mapping with Jacobian $ J $,
    satisfying $ n \geq n' $ and
    $ \det \left( J^T J \right) \neq 0 $ almost everywhere.
    Let either $ \eta $ be a discrete random variable, or $ (\xi, \eta) $ be an absolutely continuous random vector.
    Then
    \begin{equation}
        \label{eq:MI_under_nonsingular_mapping}
        I(\xi; \eta) = I\left(f(\xi); \eta \right)
    \end{equation}
\end{statementE}

\begin{remark}
    In what follows by $ \xi \colon \Omega \to \mathbb{R}^{n'} $ we denote the compressed representation of $ X $, $ n' \leq n $.
\end{remark}

\begin{proofE}
    For any function $ f $, let us denote $ \sqrt{\det \left( J^T(x) J(x) \right)} $ (area transformation coefficient) by $ \alpha(x) $ where it exists.

    Foremost, let us note that in both cases, $ \rho_\xi(x \mid \eta) $ and $ \rho_{f(\xi)}(x' \mid \eta) = \rho_\xi(x \mid \eta) / \alpha(x) $ exist.
    Hereinafter, we integrate over $ \supp \xi \cap \left\{ x \mid \alpha(x) \neq 0 \right\} $ instead of $ \supp \xi $;
    as $ \alpha \neq 0 $ almost everywhere by the assumption, the values of the integrals are not altered.

    According to the definition of the differential entropy,
    \begin{align*}
        h(f(\xi)) &= -\int \frac{\rho_\xi(x)}{\alpha(x)}\log\left(\frac{\rho_\xi(x)}{\alpha(x)}\right)\alpha(x) \, dx = \\
        &= -\int \rho_\xi(x) \log\left(\rho_\xi(x)\right) dx + \int \rho_\xi(x) \log\left(\alpha(x)\right) dx = \\
        & = h(\xi) + \expect \log \alpha(\xi).
    \end{align*}
    \begin{align*}
        h(f(\xi) \mid \eta) &= \expect_{\eta} \left( - \int \frac{\rho_\xi(x \mid \eta)}{\alpha(x)} \log \left( \frac{\rho_\xi(x \mid \eta)}{\alpha(x)} \right) \, \alpha(x) \, dx \right) = \\
        & = \expect_{\eta} \left( - \int \rho_\xi(x \mid \eta) \log \left( \rho_\xi(x \mid \eta) \right) dx + \int \rho_\xi(x \mid \eta) \log \left( \alpha(x) \right) dx \right) = \\
        & = h(\xi \mid \eta) + \expect \log \alpha(\xi) \\
    \end{align*}
    Finally, by the MI definition,
    \[
        I(f(\xi); \eta) = h(f(\xi)) - h(f(\xi) \mid \eta) = h(\xi) - h(\xi \mid \eta) = I(\xi;\eta).
    \]
\end{proofE}

Recall that we utilize the stochastic neural network (NN) approach to address the problem of infinite mutual information $I(X; f(X))$ for a deterministic mapping $f$.
As demonstrated in~\citet{goldfeld2019estimating_information_flow}, introducing stochasticity enables proper MI estimation between layers of the network.
The stochastic modification of a network serves as a proxy to determine the information-theoretic properties of the original model.

A conventional feedforward NN can be defined as an acyclic computational graph that can be topologically sorted:
\[
    L_0 \defeq X, \quad
    L_1 := f_1(L_0), \quad
    L_2 := f_2(L_0, L_1), \quad
    \ldots, \quad
    \hat Y \defeq L_n := f_n(L_0, \ldots, L_{n-1}),
\]
where $ L_0, \ldots, L_n $ denote the outputs of the network's layers. The stochastic modification is defined similarly, but using the Markov chain stochastic model:

\begin{definition}
    \label{def:stochastic_neural_network}
    The sequence of random vectors $ L_0, \ldots, L_n $ is said to form a \emph{stochastic neural network} with input $ X $ and output $ \hat Y $,
    if $ L_0 \defeq X $, $ \hat Y \defeq L_n $, and
    \[
        L_0 \; \longrightarrow \; (L_0, L_1) \; \longrightarrow \; \ldots \; \longrightarrow \; (L_0, \ldots, L_n)
    \]
    is a Markov chain;
    $ L_k $ represents outputs of the $ k $-th layer of the network.
\end{definition}

Our primary objective is to track $ I(L_i; L_j) $ during the training process.
In the subsequent sections, we assume the manifold hypothesis to hold for $ X $.
In such case, under certain additional circumstances (continuity of $ f_k $,
small magnitude of injected stochasticity) this hypothesis can also be assumed for $ L_k $,
thereby justifying the proposed method.

\section{Mutual information estimation via compression}
\label{sec:mutual_information_estimation_lossy}

In this section, we explore the application of lossless and lossy compression to estimation of MI between high-dimensional random vectors.
We mention the limitations of conventional MI estimators,
propose and theoretically justify a complementary lossy compression step to address the curse of dimensionality,
and derive theoretical bounds on the MI estimate under lossy compression.

\subsection{Mutual information estimation}
\label{subsec:mutual_information_estimation}

Let $ \{ (x_k, y_k) \}_{k = 1}^N $ be a sequence of i.i.d. samples from the joint distribution of random vectors $ X $ and $ Y $.
Our goal is to estimate the mutual information between $ X $ and $ Y $, denoted as $ I(X; Y) $, based on these samples.
The most straightforward way to achieve this is to estimate all the components in
\eqref{eq:mutual_information_from_conditional_entropy} or \eqref{eq:mutual_information_from_joined_entropy} via entropy estimators.
More advanced methods of MI estimation, like MINE~\citep{belghazi2018mine}, are also applicable.
However, according to Theorem 1 in \citet{goldfeld2020convergence_of_SEM_entropy_estimation} and Theorem 4.1 in \citet{mcallester2020limitations_MI},
sample complexity of entropy (and MI) estimation is exponential in dimension.
We show that this obstacle can be overcome if data possesses low-dimensional internal structure.

In our work, we make the assumption of the manifold hypothesis~\citep{fefferman2013testing_manifold_hypothesis},
which posits that data lie along or close to some manifold in multidimensional space.
This hypothesis is believed to hold for a wide range of structured data,
and there are datasets known to satisfy this assumption precisely
(e.g., photogrammetry datasets, as all images are parametrized by camera position and orientation).
In our study, we adopt a simplified definition of the manifold hypothesis:
\begin{definition}
    \label{def:manifold_hypothesis}
    A random vector $ X \colon \Omega \to \reals^n $ strictly satisfies the manifold hypothesis iff there exist $ \xi \colon \Omega \to \reals^{n'} $ and $ f \colon \reals^{n'} \to \reals^n $ satisfying the conditions of Statement \ref{statement:MI_under_nonsingular_mappings}, such that $ X = f(\xi) $.
    A random vector $ X' \colon \Omega \to \reals^n $ loosely satisfies the manifold hypothesis iff $ X' = X + Z $, where $ X $ strictly satisfies the manifold hypothesis, and $ Z $ is insignificant in terms of some metric.
\end{definition}

To overcome the curse of dimensionality, we propose learning the manifold with
autoencoders~\citep{kramer1991autoencoder, hinton2006dimensionality_reduction_autoencoders} and applying conventional estimators to the compressed representations.
To address the issue of measure-zero support, we consider the probability measure induced on the manifold.

Let us consider an absolutely continuous $ X $, compressible via autoencoder $ A = D \circ E $.
\begin{corollaryE}
    \label{corollary:latent_MI_equality}
    Let $ E^{-1} \colon \reals^{n'} \supseteq E(\supp X) \rightarrow \reals^n $ and $ E(X) \colon \Omega \rightarrow \reals^{n'} $ exist,
    let $ (E^{-1} \circ E)(X) \equiv X $, let $ Y $, $ E(X) $ and $ E^{-1} $ satisfy conditions of the Statement \ref{statement:MI_under_nonsingular_mappings}.
    Then
    \[
        I(X; Y) = I(E(X); Y),
    \]
\end{corollaryE}

\begin{proofE}
    \[
        I(X;Y) = \underbrace{I\left((E^{-1} \circ E)(X); Y \right) =
        I\left(E(X); Y\right)}_{\textnormal{from the Statement \ref{statement:MI_under_nonsingular_mappings}}}
    \]
\end{proofE}

In case of absolutely continuous $ (X,Y) $, the mutual information estimate can be defined as follows:
\begin{equation}
    \label{eq:MI_estimate}
    \hat I(X; Y) \defeq \hat h(E(X)) + \hat h(Y) - \hat h(E(X), Y)
\end{equation}
In case of absolutely continuous $ X $ and discrete $ Y $, it is impractical to use \eqref{eq:mutual_information_from_joined_entropy},
as the (induced) joined probability distribution is neither absolutely continuous nor discrete. However, \eqref{eq:mutual_information_from_conditional_entropy} is still valid:
\begin{equation*}
    \label{eq:conditional_entropy_discrete_2}
    h(X \mid Y) = \sum_{\mathclap{y \in \supp Y}} \; p_Y(y) \cdot \underbrace{\left[ - \int\limits \rho_X(x \mid Y = y) \log \left( \rho_X(x \mid Y = y) \right) \, dx \right]}_{h(X \mid Y = y)}
\end{equation*}
Probabilities $ p_Y $ can be estimated using empirical frequencies: $ \hat p_Y(y) = \frac{1}{N} \cdot |\{ k \mid y_k = y \}| $.
Conditional entropy $ h(X \mid Y = y) $ can be estimated using corresponding subsets of $ \{ x_k \}_{k = 1}^N $:
$ \hat h(X \mid Y = y) = \hat h(\{ x_k \mid y_k = y \}) $.
The mutual information estimate in this case can be defined as follows:
\begin{equation}
    \label{eq:MI_estimate_discrete}
    \hat I(X; Y) \defeq \hat h(E(X)) - \sum_{\mathclap{y \in \supp Y}} \; \hat p_Y(y) \cdot \hat h(E(X) \mid Y = y)
\end{equation}

According to the strong law of large numbers, $ \hat p \overset{\textnormal{a.s.}}{\longrightarrow} p $.
That is why the convergence of the proposed MI estimation methods solely relies on the convergence of the entropy estimator
used in~\eqref{eq:MI_estimate} and \eqref{eq:MI_estimate_discrete}.
Note that this method can be obviously generalized to account for compressible $ Y $.

\subsection{Bounds for mutual information estimate}
\label{subsec:bounds_for_MI_estimate}

It can be shown that it is not possible to derive non-trivial bounds for $ I(E(X); Y) $
in general case if the conditions of Corollary~\ref{corollary:latent_MI_equality} do not hold.
Let us consider a simple linear autoencoder that is optimal in terms of mean squared error,
which is principal component analysis-based autoencoder.
The following statement demonstrates cases where the proposed method of estimating mutual information through lossy compression fails.
\begin{statementE}
    \label{statement:encoding_makes_MI_zero}
    For any given $ \varkappa \geq 0 $ there exist random vectors $ X \colon \Omega \rightarrow \reals^n $,
    $ Y \colon \Omega \rightarrow \reals^m $, and a non-trivial linear autoencoder $ A = D \circ E $ with latent space dimension $ n' < n $
    that is optimal in terms of minimizing mean squared error  $ \expect \| X - A(X) \|^2 $, such that $ I(X;Y) = \varkappa $ and $ I(E(X); Y) = 0 $.
\end{statementE}

\begin{proofE}
    Let us consider the following three-dimensional Gaussian vector $ (X_1, X_2, Y) \defeq (X, Y) $:
    \[
        X \sim \normal \left(
            0, 
            \begin{bmatrix}
                1 & 0 \\
                0 & \sigma
            \end{bmatrix}
        \right),
        \qquad
        Y \sim \normal(0, 1)
        \qquad
        (X_1, X_2, Y) \sim \normal \left(
            0,
            \begin{bmatrix}
                1 & 0 & 0 \\
                0 & \sigma & a \\
                0 & a & 1
            \end{bmatrix}
        \right),
    \]
    where $ \cov(X_2, Y) = a \defeq \sqrt{1 - e^{-2 \varkappa}} $, $ \cov(X_1,Y) = 0 $ (so $ X_1 $ and $ Y $ are independent).
    Let the intrinsic dimension be $ n' = 1 $, and $ \sigma < 1 $.
    According to the principal component analysis, the optimal linear encoder is defined up to a scalar factor by the equality $ E(X) = X_1 $.
    However, $ I(X; Y) = - \frac{1}{2} \ln \left( 1 - a^2 \right) = \varkappa $
    (see Statement~\ref{statement:M_one_dim_case}),
    but $ I(E(X); Y) = 0 $, as $ X_1 $ and $ Y $ are independent.
\end{proofE}

\begin{wrapfigure}{R}{0.4\textwidth}
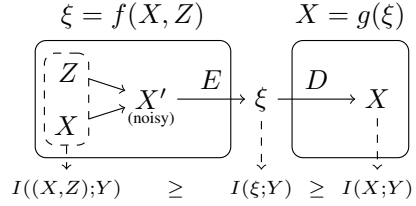

    \centering
    \vspace{-15pt}
    \includestandalone[width=.4\textwidth]{tikz/information_bounds_with_denoising}
    \caption{Conceptual scheme of Statement~\ref{statement:information_bounds_with_denoising} in application to lossy compression with autoencoder $ A = D \circ E $.}
    \label{fig:information_bounds_with_denoising}
    \vspace{-15pt} 
\end{wrapfigure}

This statement demonstrates that an arbitrary amount of information can be lost through compression of the data.
It arises from the fact that ``less significant'' in terms of metric spaces does not align with ``less significant'' in terms of information theory.
However, with additional assumptions, a more useful theoretical result can be obtained.

\begin{statementE}
    \label{statement:information_bounds_with_denoising}
    Let $ X $, $ Y $, and $ Z $ be random variables such that $ I(X; Y) $ and $ I\left((X,Z); Y\right) $ are defined.
    Let $ f $ be a function of two arguments such that $ I(f(X, Z); Y) $ is defined.
    If there exists a function $ g $ such that $ X = g(f(X, Z)) $,
    then the following chain of inequalities holds:
    \[
        I(X; Y) \leq I\left(f(X, Z); Y\right) \leq
        I((X,Z);Y)
        \leq I(X; Y) + h(Z) - h(Z \mid X, Y)
    \]
\end{statementE}

\begin{proofE}
    According to data processing inequality~\citep{cover2006information_theory},
    $ I\left(f(X, Z); Y \right) \leq I\left(X, Z; Y \right) $.
    As $ I(X; Y) = I\left(g(f(X, Z)); Y \right) $, $ I(X; Y) \leq I\left(f(X, Z); Y \right) $.

    Note that as DPI is optimal, additional assumptions on $ f $, $ X $, $ Y $ and $ Z $ are required to tighten the bounds.

    The last inequality is derived via the following equations from \citet{cover2006information_theory}:
    \begin{align*}
        I(X, Z; Y) &= I(X;Y) + I(Y;Z \mid X) \\
        I(X, Y; Z) &= I(X;Z) + I(Y;Z \mid X)
    \end{align*}
    As $ I(X;Z) \geq 0 $,
    \begin{multline*}
        I(X;Y) + I(X,Y;Z) = I(X;Y) + I(Y;Z \mid X) + I(X;Z) \geq \\
        \geq I(X;Y) + I(Y;Z \mid X) = I(X,Z;Y)
    \end{multline*}
    Finally, recall that $ I(X,Y;Z) = h(Z) - h(Z \mid X, Y) $.
\end{proofE}
In this context, $ f(X, Z) $ can be interpreted as compressed noisy data, $ X $ as denoised data, and $ g $ as a perfect denoising decoder.
The term $ h(Z) $ can be upper-bounded via entropy of Gaussian distribution of the same variance,
$ h(Z \mid X, Y) $ can be lower-bounded in special cases
(e.g., when $ Z $ is a sum of independent random vectors, at least one of which is of finite entropy); see Section~\ref{app:entropy_bounds} of the Appendix for details.
We also note the special case where the data lost by compression can be considered as independent random noise.

\begin{corollaryE}
    \label{corollary:independent_noise_denoising}
    Let $ X $, $ Y $, $ Z $, $ f $, and $ g $ satisfy the conditions of the Statement \ref{statement:information_bounds_with_denoising}.
    Let random variables $ (X, Y) $ and $ Z $ be independent. Then $ I(X; Y) = I\left(f(X,Z); Y \right) $.
\end{corollaryE}

\begin{proofE}
    Since $ (X, Y) $ and $ Z $ are independent, $ I\left(X, Z; Y \right) = I(X; Y) $,
    which implies $ I(X; Y) = I\left(f(X,Z); Y \right) $ according to the Statement~\ref{statement:information_bounds_with_denoising}.
\end{proofE}

We note that (a) the presented bounds cannot be further improved unless additional assumptions are made
(e.g., linearity of $ f $ in~\citep{kristjan2023smoothed_entropy_PCA});
(b) additional knowledge about the connection between $X$, $Y$, and $Z$ is required to properly utilize the bounds.
Other bounds can also be derived~\citep{sayyareh2011upper_bound_chi_square, belghazi2018mine, poole2019on_variational_bounds_MI},
but they do not take advantage of the compression aspect.

The provided theoretical analysis and additional results from Section~\ref{app:entropy_bounds} of the Appendix
show that the proposed method allows for tracking the true value of MI
within the errors of a third-party estimator ran on compressed data and the derived bounds imposed by the compression itself.

\section{Synthetic dataset generation}
\label{sec:synthetic}

In order to test the proposed mutual information estimator,
we developed a universal method for synthetic dataset generation with defined information-theoretic properties.
This method yields two random vectors, $ X $ and $ Y $,
with a predefined value of mutual information $I(X; Y)$.
The method requires $ X $ and $ Y $ to be images of normally distributed vectors under known nonsingular smooth mappings.
The generation consists of two steps.
First, a normal vector $ (\xi, \eta) \sim \normal(0, M) $ is considered,
where $ \xi \sim \normal(0, I_{n'}) $, $ \eta \sim \normal(0, I_{m'}) $,
and $ n' $, $ m' $ are dimensions of $ \xi $ and $ \eta $, respectively.
The covariance matrix $ M $ is chosen to satisfy $ I(\xi; \eta) = \varkappa $, where $ \varkappa $ is an arbitrary non-negative constant.

\begin{statementE}
\label{corollary:multidimensional}
    For every $ \varkappa \geq 0 $  and every $ n', m' \in \mathbb{N}$ exists a matrix $M \in \mathbb{R}^{(n' + m') \times (n' + m')} $
    such that $ (\xi, \eta) \sim \mathcal{N}(0, M)$, $ \xi \sim \mathcal{N}(0, I_{n'})$, $ \eta \sim \mathcal{N}(0, I_{m'}) $ and $ I(\xi; \eta) = \varkappa $.
\end{statementE}

\begin{proofE}
    We divide the proof into the following statements:

    \begin{statement}
        \label{statement:M_one_dim_case}
        Let $ (\xi,\eta) \sim \normal(0, M) $ be a Gaussian pair of (scalar) random variables
        with unit variance such that $ I(\xi;\eta) = \varkappa $.
        Then
        \begin{equation}
            \label{eq:M_one_dim_case}
            M =
            \begin{bmatrix}
                1 & a \\
                a & 1 \\
            \end{bmatrix},
            \qquad
            a = \sqrt{1 - e^{-2 \varkappa}}
        \end{equation}
    \end{statement}

    \begin{proof}
        Differential entropy of multivariate normal distribution $ \normal(\mu, \Sigma) $ is $ h = \frac{1}{2} \ln\left( \det \left( 2 \pi e  \cdot \Sigma \right) \right) $.
        This and \eqref{eq:mutual_information_from_joined_entropy} leads to the following:
        \[
            \varkappa = I(\xi; \eta) = \frac{1}{2} \ln(2 \pi e) + \frac{1}{2} \ln(2 \pi e) - \frac{1}{2} \ln \left((2 \pi e)^2 \cdot (1 - a^2) \right) = - \frac{1}{2} \ln \left( 1 - a^2 \right)
        \]
        \[
            a = \sqrt{1 - e^{-2 \varkappa}}
        \]
    \end{proof}

    \begin{statement}
        \label{statement:independent_MI_H}
        Let $ \xi $ and $ \eta $ be independent random variables. Then $ I(\xi; \eta) = 0 $, $ h(\xi, \eta) = h(\xi) + h(\eta) $.
    \end{statement}
    
    \begin{proof}
        We consider only the case of absolutely continuous $ \xi $.
        As $ \xi $ and $ \eta $ are independent, $ \rho_\xi(x \mid \eta = y) = \rho_\xi(x) $.
        That is why $ I(\xi;\eta) = h(\xi) - h(\xi \mid \eta) = h(\xi) - h(\xi) = 0 $, according to the definition of MI.
        The second equality is derived from \eqref{eq:mutual_information_from_joined_entropy}.
    \end{proof}
    
    \begin{corollary}
        \label{corollary:combined_MI}
        Let $ \xi_1 $, $ \xi_2 $ and $ \eta_1 $, $ \eta_2 $ be random variables, independent in the following tuples: $ (\xi_1, \xi_2) $, $ (\eta_1, \eta_2) $ and $ \left((\xi_1, \eta_1), (\xi_2, \eta_2) \right) $. Then $ I \left((\xi_1, \xi_2); (\eta_1, \eta_2) \right) = I(\xi_1; \eta_1) +  I(\xi_2; \eta_2) $
    \end{corollary}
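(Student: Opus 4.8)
The plan is to reduce the claim to the joint-entropy form of mutual information in~\eqref{eq:mutual_information_from_joined_entropy} and then apply the entropy-additivity half of Statement~\ref{statement:independent_MI_H} three times, once for each stated independence relation. First I would write, treating $(\xi_1,\xi_2)$ and $(\eta_1,\eta_2)$ as the two random vectors in~\eqref{eq:mutual_information_from_joined_entropy},
\[
    I\bigl((\xi_1,\xi_2); (\eta_1,\eta_2)\bigr) = h(\xi_1,\xi_2) + h(\eta_1,\eta_2) - h(\xi_1,\xi_2,\eta_1,\eta_2).
\]

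The next step is to split each of the three entropy terms using the matching independence assumption. Since $\xi_1$ and $\xi_2$ are independent, Statement~\ref{statement:independent_MI_H} gives $h(\xi_1,\xi_2) = h(\xi_1) + h(\xi_2)$, and likewise $h(\eta_1,\eta_2) = h(\eta_1) + h(\eta_2)$. For the joint term I would regard $(\xi_1,\eta_1)$ and $(\xi_2,\eta_2)$ as two (vector-valued) arguments, which are independent by assumption, so the same statement yields $h(\xi_1,\eta_1,\xi_2,\eta_2) = h(\xi_1,\eta_1) + h(\xi_2,\eta_2)$. Substituting these three identities and regrouping the six resulting entropy terms gives
\[
    I\bigl((\xi_1,\xi_2);(\eta_1,\eta_2)\bigr) = \bigl[h(\xi_1)+h(\eta_1)-h(\xi_1,\eta_1)\bigr] + \bigl[h(\xi_2)+h(\eta_2)-h(\xi_2,\eta_2)\bigr],
\]
and each bracket is precisely $I(\xi_1;\eta_1)$ and $I(\xi_2;\eta_2)$ by one more application of~\eqref{eq:mutual_information_from_joined_entropy}, which finishes the argument.

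The only real subtlety, and the place I would be most careful, is the third application of Statement~\ref{statement:independent_MI_H} to the grouped pair $((\xi_1,\eta_1),(\xi_2,\eta_2))$: that statement is phrased for two independent random variables, so I must justify treating the blocks $(\xi_1,\eta_1)$ and $(\xi_2,\eta_2)$ as single arguments. This is legitimate because the additivity of differential entropy under independence holds verbatim for random vectors, and the assumed independence of the two pairs is exactly the hypothesis required.

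Everything else is routine bookkeeping. I would also remark that the first two independence conditions are in fact implied by the third, since $\xi_i$ and $\eta_i$ are coordinate projections of the independent pairs and functions of independent random variables remain independent; consequently the decomposition uses no assumptions beyond those stated, and the final regrouping introduces none.
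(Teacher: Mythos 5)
Your proof is correct and is essentially identical to the paper's: both expand $I\bigl((\xi_1,\xi_2);(\eta_1,\eta_2)\bigr)$ via~\eqref{eq:mutual_information_from_joined_entropy}, split the three joint entropies using the entropy-additivity part of Statement~\ref{statement:independent_MI_H} applied to each of the three independence assumptions (including the block-wise application to $(\xi_1,\eta_1)$ and $(\xi_2,\eta_2)$), and regroup. Your added remark that the third independence condition implies the first two is a correct observation not made in the paper, but the core argument is the same.
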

    
    \begin{proof}
        From \eqref{eq:mutual_information_from_joined_entropy} and Statement \ref{statement:independent_MI_H} the following chain of equalities is derived:
        \begin{align*}
            I \left((\xi_1, \xi_2); (\eta_1, \eta_2) \right)
            &= h(\xi_1, \xi_2) + h(\eta_1, \eta_2) - h(\xi_1, \xi_2, \eta_1, \eta_2) = \\
            &= h(\xi_1) + h(\xi_2) + h(\eta_1) + h(\eta_2) - h(\xi_1, \eta_1) - h(\xi_2, \eta_2) = \\ 
            &= I(\xi_1; \eta_1) +  I(\xi_2; \eta_2)
        \end{align*}
    \end{proof}

    The Statement \ref{statement:M_one_dim_case} and Corollary \ref{corollary:combined_MI} provide us with a trivial way of generating dependent normal random vectors with a defined mutual information. Firstly, we consider $ \Xi \sim \normal(0, M') $, where $ M' $ is a $ (n' + m') \times (n' + m') $ block-diagonal matrix with blocks from \eqref{eq:M_one_dim_case}. The number of blocks is $ k = \min\{ n', m' \} $ (other diagonal elements are units). The parameter $ \varkappa $ for each block equals $ I(\xi; \eta) / k $, where $ I(\xi; \eta) $ is the desired mutual information of the resulting vectors. The components of $ \Xi $ are then rearranged to get $ (\xi, \eta) \sim \normal(0, M) $, where $ \xi \sim \normal(0, I_{n'}) $ and $ \eta \sim \normal(0, I_{m'}) $. The final structure of $ M $ is as follows:

    \begin{equation}
        \label{eq:M_structure}
        M =
        \left[
            \begin{array}{ccc|ccc}
                1 &   &        & a &   &        \\
                  & 1 &        &   & a &        \\
                  &   & \ddots &   &   & \ddots \\
                \hline
                a &   &        & 1 &   &        \\
                  & a &        &   & 1 &        \\
                \undermat{n'}{ \hphantom{m} & \hphantom{m} & \ddots} & \undermat{m'}{ \hphantom{m} & \hphantom{m} & \ddots} \\
            \end{array}
        \right]
        \vspace{16pt}
    \end{equation}
\end{proofE}

After generating the correlated normal random vectors $(\xi, \eta)$ with the desired mutual information,
we apply smooth non-singular mappings to obtain $X = f(\xi)$ and $Y = g(\eta)$.
According to Statement~\ref{statement:MI_under_nonsingular_mappings},
this step preserves the mutual information, so $I(\xi; \eta) = I(X; Y)$.

\section{Comparison of the entropy estimators}
\label{sec:comparison}

The MI estimate is acquired according to Subsection~\ref{subsec:mutual_information_estimation}.
To estimate the entropy terms in \eqref{eq:mutual_information_from_conditional_entropy} or \eqref{eq:mutual_information_from_joined_entropy},
we leverage conventional entropy estimators, such as kernel density-based~\citep{turlach1999bandwidth_selection, sayyareh2011upper_bound_chi_square, sain1994adaptive_kde}
and Kozachenko-Leonenko estimators (original~\citet{kozachenko1987entropy_of_random_vector} and weighted~\citet{berrett2019efficient_knn_entropy_estimation} versions).
To test the accuracy of these approaches, we use datasets sampled from synthetic random vectors with known MI.
We generate these datasets in accordance with Section \ref{sec:synthetic}.

To examine the impact of the compression step proposed in Subsection \ref{subsec:mutual_information_estimation},
we utilize a special type of synthetic datasets.
Synthetic data lies on a manifold of small dimension.
This is achieved by generating a low-dimensional dataset and then embedding it into a high-dimensional space by a smooth mapping
(so Statement~\ref{statement:MI_under_nonsingular_mappings} can be applied).
Then, the acquired datasets are compressed via autoencoders.
Finally, the obtained results are fed into a mutual information estimator.

Algorithm~\ref{alg:main_algo} and Figure~\ref{fig:main_algo} describe the proposed mutual information estimation quality measurement.
We run several experiments with $ f $ and $ g $ mapping normal distributions to rasterized images of geometric shapes (e.g., rectangles)
or 2D plots of smooth functions (e.g., Gaussian functions).%
\footnote{Due to the high complexity of the used $ f $ and $ g $, we do not define these functions in the main text;
    instead, we refer to the source code published along with the paper%
\ifdefined\ARXIV
~\citep{Butakov_Package_for_information-theoretic}.
\else
.
\fi}
The results are presented in Figures~\ref{fig:compare_methods_gauss} and \ref{fig:compare_methods_rectangles}.
The blue and green curves correspond to the estimates of MI marked by the corresponding colors in Figure~\ref{fig:main_algo}.
Thus, we see that the compression step does not lead to poor estimation accuracy,
especially for the weighted Kozachenko-Leonenko (WKL) estimator, which demonstrates the best performance.
Note that we do not plot estimates for uncompressed data,
as all the four tested classical estimators completely fail to correctly estimate MI for such high-dimensional data;
for more information, we refer to Section~\ref{subsec:classical_entropy_estimators_limitations} in the Appendix.
We also conduct experiments with MINE (without compression), for which we train a critic network of the same complexity, as we use for the autoencoder.

\begin{algorithm}[ht]
	\caption{Measure mutual information estimation quality on high-dimensional synthetic datasets}
	\begin{algorithmic}[1]
    	\STATE Generate two datasets of samples from normal vectors $ \xi $ and $ \eta $ with given mutual information
        as described in section \ref{sec:synthetic}~-- $ \{ (x_k, y_k) \}_{k=1}^N $.
    	\STATE Choose functions $ f $ and $ g $ satisfying conditions of Statement~\ref{statement:MI_under_nonsingular_mappings}
        (so $ I(\xi; \eta) = I(f(\xi); g(\eta)) $) and obtain datasets for $ f(\xi) $ and $ g(\eta) $~-- $ \{ f(x_k) \}_{k=1}^N $, $ \{ g(y_k) \}_{k=1}^N $. 
    	\STATE Train autoencoders $ A_X = D_X \circ E_X $, $ A_Y = D_Y \circ E_Y $ on $ \{  f(x_k) \} $, $ \{ g(y_k) \} $ respectively.
    	\STATE Obtain datasets for $ (E_X \circ f)(\xi) $ and $ (E_Y \circ g)(\eta) $.
    	We assume that $ E_X $, $ E_Y $ satisfy conditions of the Corollary \ref{corollary:latent_MI_equality}, so we expect
    	\[
    		  I(\xi; \eta) = I(f(\xi); g(\eta)) = I\bigl((E_X \circ f)(\xi); (E_Y \circ g)(\eta) \bigr)
        \]
        \vspace{-10pt}
    	\STATE Estimate $ I \bigl((E_X \circ f)(\xi); (E_Y \circ g)(\eta) \bigr) $ and compare the estimated value with the exact one.
	\end{algorithmic}
	\label{alg:main_algo}
\end{algorithm}

\begin{figure}[ht!]
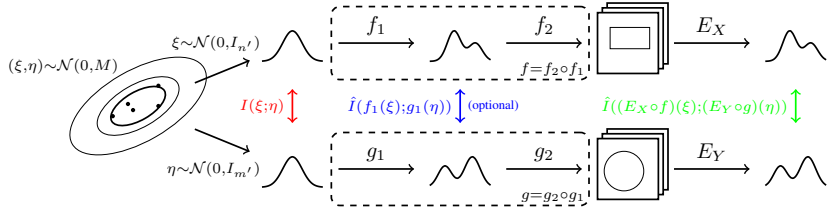

    \centering
    \includestandalone[width=1.0\textwidth]{tikz/main_algo}
    \caption{Conceptual scheme of Algorithm~\ref{alg:main_algo}.
        In order to observe and quantify the loss of information caused by the compression step, we split $ f \colon \reals^{n'} \to \reals^{n} $
        into two functions: $ f_1 \colon \reals^{n'} \to \reals^{n'} $ maps $ \xi $ to a structured latent representation of $ X $ (e.g., parameters of geometric shapes),
        and $ f_2 \colon \reals^{n'} \to \reals^n $ maps latent representations to corresponding high-dimensional vectors
        (e.g., rasterized images of geometric shapes).
        The same goes for $ g = g_2 \circ g_1 $.
        Colors correspond to the Figures~\ref{fig:compare_methods_gauss} and~\ref{fig:compare_methods_rectangles}.
        For a proper experimental setup, we require $ f_1, f_2, g_1, g_2 $ to satisfy the conditions of Statement~\ref{statement:MI_under_nonsingular_mappings}.}
    \label{fig:main_algo}
\end{figure}

\begin{figure}
    \centering
    \fontsize{6}{12}\selectfont
    \begin{subfigure}[b]{0.20\textwidth}
        \centering
        \begin{gnuplot}[terminal=epslatex, terminaloptions={color dashed size 2.9cm,2.5cm fontscale 0.3}]
            load "./gnuplot/common.gp"
    
            file_path = "./data/mutual_information/synthetic/gaussian_16x16/parameters/KDE_gaussian_loo_cl__5000_2_2__22-Mar-2023_21_28_49.csv"
            compressed_file_path = "./data/mutual_information/synthetic/gaussian_16x16/compressed/KDE_gaussian_loo_cl__5000_2_2__22-Mar-2023_21_28_49.csv"
            
            confidence = 0.95
    
            load "./gnuplot/estmi_mi_2.gp"
        \end{gnuplot}
        \vspace{-0.6cm}
        \caption{KDE, ML}
    \end{subfigure}%
    \begin{subfigure}[b]{0.20\textwidth}
        \centering
        \begin{gnuplot}[terminal=epslatex, terminaloptions={color dashed size 2.9cm,2.5cm fontscale 0.3}]
            load "./gnuplot/common.gp"
    
            file_path = "./data/mutual_information/synthetic/gaussian_16x16/parameters/KDE_gaussian_loo_lsq__5000_2_2__23-Mar-2023_00_24_30.csv"
            compressed_file_path = "./data/mutual_information/synthetic/gaussian_16x16/compressed/KDE_gaussian_loo_lsq__5000_2_2__23-Mar-2023_00_24_30.csv"
            
            confidence = 0.95
    
            load "./gnuplot/estmi_mi_2.gp"
        \end{gnuplot}
        \vspace{-0.6cm}
        \caption{KDE, LSE}
    \end{subfigure}%
    \begin{subfigure}[b]{0.20\textwidth}
        \centering
        \begin{gnuplot}[terminal=epslatex, terminaloptions={color dashed size 2.9cm,2.5cm fontscale 0.3}]
            load "./gnuplot/common.gp"
    
            file_path = "./data/mutual_information/synthetic/gaussian_16x16/parameters/KL_1__5000_2_2__02-Apr-2023_14_01_55.csv"
            compressed_file_path = "./data/mutual_information/synthetic/gaussian_16x16/compressed/KL_1__5000_2_2__02-Apr-2023_14_01_55.csv"
            
            confidence = 0.95
    
            load "./gnuplot/estmi_mi_2.gp"
        \end{gnuplot}
        \vspace{-0.6cm}
        \caption{KL}
    \end{subfigure}%
    \begin{subfigure}[b]{0.20\textwidth}
        \centering
        \begin{gnuplot}[terminal=epslatex, terminaloptions={color dashed size 2.9cm,2.5cm fontscale 0.3}]
            load "./gnuplot/common.gp"
    
            file_path = "./data/mutual_information/synthetic/gaussian_16x16/parameters/KL_5__5000_2_2__02-Apr-2023_14_04_47.csv"
            compressed_file_path = "./data/mutual_information/synthetic/gaussian_16x16/compressed/KL_5__5000_2_2__02-Apr-2023_14_04_47.csv"
          
            confidence = 0.95
    
            load "./gnuplot/estmi_mi_2.gp"
        \end{gnuplot}
        \vspace{-0.6cm}
        \caption{WKL}
    \end{subfigure}%
    \begin{subfigure}[b]{0.20\textwidth}
        \centering
        \begin{gnuplot}[terminal=epslatex, terminaloptions={color dashed size 2.9cm,2.5cm fontscale 0.3}]
            load "./gnuplot/common.gp"
    
            file_path = "./data/mutual_information/synthetic/gaussian_16x16/MINE.csv"
            confidence = 0.95
    
            load "./gnuplot/estmi_mi_2.gp"
        \end{gnuplot}
        \vspace{-0.6cm}
        \caption{MINE}
    \end{subfigure}

    \begin{subfigure}[b]{0.20\textwidth}
        \centering
        \begin{gnuplot}[terminal=epslatex, terminaloptions={color dashed size 2.9cm,2.5cm fontscale 0.3}]
            load "./gnuplot/common.gp"
    
            file_path = "./data/mutual_information/synthetic/gaussian_32x32/parameters/KDE_gaussian_loo_cl__5000_2_2__16-Nov-2022_13_21_22.csv"
            compressed_file_path = "./data/mutual_information/synthetic/gaussian_32x32/compressed/KDE_gaussian_loo_cl__5000_2_2__16-Nov-2022_13_21_22.csv"
            confidence = 0.95
    
            load "./gnuplot/estmi_mi_2.gp"
        \end{gnuplot}
        \vspace{- 0.6cm}
        \caption{KDE, ML}
    \end{subfigure}%
    \begin{subfigure}[b]{0.20\textwidth}
        \centering
        \begin{gnuplot}[terminal=epslatex, terminaloptions={color dashed size 2.9cm,2.5cm fontscale 0.3}]
            load "./gnuplot/common.gp"
    
            file_path = "./data/mutual_information/synthetic/gaussian_32x32/parameters/KDE_gaussian_loo_lsq__5000_2_2__28-Feb-2023_12_56_57.csv"
            compressed_file_path = "./data/mutual_information/synthetic/gaussian_32x32/compressed/KDE_gaussian_loo_lsq__5000_2_2__28-Feb-2023_12_56_57.csv"
            confidence = 0.95
    
            load "./gnuplot/estmi_mi_2.gp"
        \end{gnuplot}
        \vspace{-0.6cm}
        \caption{KDE, LSE}
    \end{subfigure}%
    \begin{subfigure}[b]{0.20\textwidth}
        \centering
        \begin{gnuplot}[terminal=epslatex, terminaloptions={color dashed size 2.9cm,2.5cm fontscale 0.3}]
            load "./gnuplot/common.gp"
    
            file_path = "./data/mutual_information/synthetic/gaussian_32x32/parameters/KL_1__5000_2_2__16-Nov-2022_12_26_50.csv"
            compressed_file_path = "./data/mutual_information/synthetic/gaussian_32x32/compressed/KL_1__5000_2_2__16-Nov-2022_12_26_50.csv"
            confidence = 0.95
    
            load "./gnuplot/estmi_mi_2.gp"
        \end{gnuplot}
        \vspace{-0.6cm}
        \caption{KL}
    \end{subfigure}%
    \begin{subfigure}[b]{0.20\textwidth}
        \centering
        \begin{gnuplot}[terminal=epslatex, terminaloptions={color dashed size 2.9cm,2.5cm fontscale 0.3}]
            load "./gnuplot/common.gp"
    
            file_path = "./data/mutual_information/synthetic/gaussian_32x32/parameters/KL_5__5000_2_2__16-Nov-2022_12_20_45.csv"
            compressed_file_path = "./data/mutual_information/synthetic/gaussian_32x32/compressed/KL_5__5000_2_2__16-Nov-2022_12_20_45.csv"
            confidence = 0.95
    
            load "./gnuplot/estmi_mi_2.gp"
        \end{gnuplot}
        \vspace{-0.6cm}
        \caption{WKL}
    \end{subfigure}%
    \begin{subfigure}[b]{0.20\textwidth}
        \centering
        \begin{gnuplot}[terminal=epslatex, terminaloptions={color dashed size 2.9cm,2.5cm fontscale 0.3}]
            load "./gnuplot/common.gp"
    
            file_path = "./data/mutual_information/synthetic/gaussian_32x32/MINE.csv"
            confidence = 0.95
    
            load "./gnuplot/estmi_mi_2.gp"
        \end{gnuplot}
        \vspace{-0.6cm}
        \caption{MINE}
    \end{subfigure}%
    \caption{Maximum-likelihood and Least Squares Error KDE, Non-weighted and Weighted Kozachenko-Leonenko, MINE for $ 16 \times 16 $ (first row) and $ 32 \times 32 $ (second row) images of 2D Gaussians ($ n' = m' = 2 $), $ 5 \cdot 10^3 $ samples. Along $ x $ axes is $ I(X;Y) $, along $ y $ axes is $ \hat I(X;Y) $.}
    \label{fig:compare_methods_gauss}
\end{figure}

\begin{figure}


    \centering
    \fontsize{6}{12}\selectfont
    \begin{subfigure}[b]{0.20\textwidth}
        \centering
        \begin{gnuplot}[terminal=epslatex, terminaloptions={color dashed size 2.9cm,2.5cm fontscale 0.3}]
            load "./gnuplot/common.gp"
    
            file_path = "./data/mutual_information/synthetic/rectangles_16x16/coordinates/KDE_gaussian_loo_cl__5000_4_4__13-Jun-2022_09_40_51.csv"
            compressed_file_path = "./data/mutual_information/synthetic/rectangles_16x16/compressed/KDE_gaussian_loo_cl__5000_4_4__13-Jun-2022_09_40_51.csv"
            confidence = 0.95
    
            load "./gnuplot/estmi_mi_2.gp"
        \end{gnuplot}
        \vspace{-0.6cm}
        \caption{KDE, ML}
    \end{subfigure}%
    \begin{subfigure}[b]{0.20\textwidth}
        \centering
        \begin{gnuplot}[terminal=epslatex, terminaloptions={color dashed size 2.9cm,2.5cm fontscale 0.3}]
            load "./gnuplot/common.gp"
    
            file_path = "./data/mutual_information/synthetic/rectangles_16x16/coordinates/KDE_gaussian_loo_lsq__5000_4_4__11-Jun-2022_08_33_42.csv"
            compressed_file_path = "./data/mutual_information/synthetic/rectangles_16x16/compressed/KDE_gaussian_loo_lsq__5000_4_4__11-Jun-2022_08_33_42.csv"
            confidence = 0.95
    
            load "./gnuplot/estmi_mi_2.gp"
        \end{gnuplot}
        \vspace{-0.6cm}
        \caption{KDE, LSE}
    \end{subfigure}%
    \begin{subfigure}[b]{0.20\textwidth}
        \centering
        \begin{gnuplot}[terminal=epslatex, terminaloptions={color dashed size 2.9cm,2.5cm fontscale 0.3}]
            load "./gnuplot/common.gp"
    
            file_path = "./data/mutual_information/synthetic/rectangles_16x16/coordinates/KL_1__5000_4_4__21-Sep-2022_10_36_55.csv"
            compressed_file_path = "./data/mutual_information/synthetic/rectangles_16x16/compressed/KL_1__5000_4_4__21-Sep-2022_10_36_55.csv"
            confidence = 0.95

            # Положение подписей в нижнем правом углу.
            key_pos = "lower"
    
            load "./gnuplot/estmi_mi_2.gp"
        \end{gnuplot}
        \vspace{-0.6cm}
        \caption{KL}
    \end{subfigure}%
    \begin{subfigure}[b]{0.20\textwidth}
        \centering
        \begin{gnuplot}[terminal=epslatex, terminaloptions={color dashed size 2.9cm,2.5cm fontscale 0.3}]
            load "./gnuplot/common.gp"
    
            file_path = "./data/mutual_information/synthetic/rectangles_16x16/coordinates/KL_5__5000_4_4__23-Aug-2022_20_28_23.csv"
            compressed_file_path = "./data/mutual_information/synthetic/rectangles_16x16/compressed/KL_5__5000_4_4__23-Aug-2022_20_28_23.csv"
            confidence = 0.95

            max_y = 10.0
    
            load "./gnuplot/estmi_mi_2.gp"
        \end{gnuplot}
        \vspace{-0.6cm}
        \caption{WKL}
    \end{subfigure}%
    \begin{subfigure}[b]{0.20\textwidth}
        \centering
        \begin{gnuplot}[terminal=epslatex, terminaloptions={color dashed size 2.9cm,2.5cm fontscale 0.3}]
            load "./gnuplot/common.gp"
    
            file_path = "./data/mutual_information/synthetic/rectangles_16x16/MINE.csv"
            confidence = 0.95
    
            load "./gnuplot/estmi_mi_2.gp"
        \end{gnuplot}
        \vspace{-0.6cm}
        \caption{MINE}
    \end{subfigure}

    \begin{subfigure}[b]{0.20\textwidth}
        \centering
        \begin{gnuplot}[terminal=epslatex, terminaloptions={color dashed size 2.9cm,2.5cm fontscale 0.3}]
            load "./gnuplot/common.gp"
    
            file_path = "./data/mutual_information/synthetic/rectangles_32x32/coordinates/KDE_gaussian_loo_cl__5000_4_4__24-Aug-2022_20_13_43.csv"
            compressed_file_path = "./data/mutual_information/synthetic/rectangles_32x32/compressed/KDE_gaussian_loo_cl__5000_4_4__24-Aug-2022_20_13_43.csv"
            confidence = 0.95
    
            load "./gnuplot/estmi_mi_2.gp"

        \end{gnuplot}
        \vspace{-0.6cm}
        \caption{KDE, ML}
    \end{subfigure}%
    \begin{subfigure}[b]{0.20\textwidth}
        \centering
        \begin{gnuplot}[terminal=epslatex, terminaloptions={color dashed size 2.9cm,2.5cm fontscale 0.3}]
            load "./gnuplot/common.gp"
    
            file_path = "./data/mutual_information/synthetic/rectangles_32x32/coordinates/KDE_gaussian_loo_lsq__5000_4_4__25-Oct-2022_20_35_05.csv"
            compressed_file_path = "./data/mutual_information/synthetic/rectangles_32x32/compressed/KDE_gaussian_loo_lsq__5000_4_4__25-Oct-2022_20_35_05.csv"
            confidence = 0.95
    
            load "./gnuplot/estmi_mi_2.gp"
        \end{gnuplot}
        \vspace{-0.6cm}
        \caption{KDE, LSE}
    \end{subfigure}%
    \begin{subfigure}[b]{0.20\textwidth}
        \centering
        \begin{gnuplot}[terminal=epslatex, terminaloptions={color dashed size 2.9cm,2.5cm fontscale 0.3}]
            load "./gnuplot/common.gp"
    
            file_path = "./data/mutual_information/synthetic/rectangles_32x32/coordinates/KL_1__5000_4_4__05-Oct-2022_06_35_09.csv"
            compressed_file_path = "./data/mutual_information/synthetic/rectangles_32x32/compressed/KL_1__5000_4_4__05-Oct-2022_06_35_09.csv"
            confidence = 0.95

            # Положение подписей в нижнем правом углу.
            key_pos = "lower"
    
            load "./gnuplot/estmi_mi_2.gp"
        \end{gnuplot}
        \vspace{-0.6cm}
        \caption{KL}
    \end{subfigure}%
    \begin{subfigure}[b]{0.20\textwidth}
        \centering
        \begin{gnuplot}[terminal=epslatex, terminaloptions={color dashed size 2.9cm,2.5cm fontscale 0.3}]
            load "./gnuplot/common.gp"
    
            file_path = "./data/mutual_information/synthetic/rectangles_32x32/coordinates/KL_5__5000_4_4__24-Aug-2022_06_43_17.csv"
            compressed_file_path = "./data/mutual_information/synthetic/rectangles_32x32/compressed/KL_5__5000_4_4__24-Aug-2022_06_43_17.csv"
            confidence = 0.95

            max_y = 10.0
    
            load "./gnuplot/estmi_mi_2.gp"
        \end{gnuplot}
        \vspace{-0.6cm}
        \caption{WKL}
    \end{subfigure}%
    \begin{subfigure}[b]{0.20\textwidth}
        \centering
        \begin{gnuplot}[terminal=epslatex, terminaloptions={color dashed size 2.9cm,2.5cm fontscale 0.3}]
            load "./gnuplot/common.gp"
    
            file_path = "./data/mutual_information/synthetic/rectangles_32x32/MINE.csv"
            confidence = 0.95
    
            load "./gnuplot/estmi_mi_2.gp"
        \end{gnuplot}
        \vspace{-0.6cm}
        \caption{MINE}
    \end{subfigure}%
    \caption{Maximum-likelihood and Least Squares Error KDE, Non-weighted and Weighted Kozachenko-Leonenko, MINE for $ 16 \times 16 $ (first row) and $ 32 \times 32 $ (second row) images of rectangles ($ n' = m' = 4 $), $ 5 \cdot 10^3 $ samples. Along $ x $ axes is $ I(X;Y) $, along $ y $ axes is $ \hat I(X;Y) $.}
    \label{fig:compare_methods_rectangles}
\end{figure}

\FloatBarrier

\section{Information flow in deep neural networks}
\label{sec:neural}

This section is dedicated to the information flow estimation in DNNs via the proposed method.
We estimate the information flow in a convolutional classifier of the MNIST handwritten digits dataset.
This neural network is simple enough to be quickly trained and tested,
but at the same time, is complex enough to suffer from the curse of dimensionality.
The dataset consists of images of size $ 28 \times 28 = 784 $ pixels.
It was shown in~\citet{hein2005intrinsic} that these images have a relatively low latent space dimension, approximately $12$--$13$.
If the preservation of only the main features is desired, the latent space can even be narrowed down to $ 3 $--$ 10 $.
Although the proposed experimental setup is nowadays considered to be toy and small,
it is still problematic for the IB-analysis, as it was shown in~\citet{goldfeld2019estimating_information_flow}.

It can be concluded from the previous section that the weighted Kozachenko-Leonenko estimator is superior to the other methods tested in this paper.
That is why it is used in experiments with the DNN classifier described in the current section.
The analyzed network is designed to return the output of every layer.
To avoid the problem of a deterministic relationship between input and output, we apply Gaussian dropout with a small variance after each layer.
This allows for the better generalization during the training~\citep{srivastava14simple_way} and finite values of MI during the IB-analysis~\citep{adilova2023IP_dropout}.
Lossy compression of input images $ X $ is performed via a convolutional autoencoder with a latent dimension of $ d_X^{\textnormal{latent}} $.
Lossy compression of layer outputs $ L_i $ is performed via principal component analysis
with $ d_{L_i}^{\textnormal{latent}} $ as the number of principal components,
as it showed to be faster and not significantly worse than general AE approach in this particular case.
The general algorithm is described in Algorithm~\ref{alg:mi_in_nn}.

\begin{wraptable}{R}{0.47\textwidth}
    \small
    \begin{tabular}{rl}
        $ L_1 $: & Conv2d(1, 8, ks=3), LeakyReLU(0.01) \\
        $ L_2 $: & Conv2d(8, 16, ks=3), LeakyReLU(0.01) \\
        $ L_3 $: & Conv2d(16, 32, ks=3), LeakyReLU(0.01) \\
        $ L_4 $: & Dense(32, 32), LeakyReLU(0.01) \\
        $ L_5 $: & Dense(32, 10), LogSoftMax
    \end{tabular}
    \caption{The architecture of the MNIST convolution-DNN classifier used in this paper.}
    \label{tab:classifier_architecture}
    \vspace{-10pt}
\end{wraptable}

\begin{algorithm}[ht]
	\caption{Estimate information flow in the neural network during training}
	\begin{algorithmic}[1]
	\STATE Compress the input dataset $ \{ x_k \}_{k=1}^N $ via the input encoder $ E_X $: $ c_k^X = E_X(x_k) $.
    \FOR{$ \text{epoch}: \; 1, \ldots, \text{number of epochs}$}
        \FOR{$ L_i: \; \text{layers}$}
            \STATE Collect outputs of the layer $ L_i $: $ y_{k}^{L_i} = L_i(x_k) $.
            \textit{Each layer must be noisy/stochastic.}
            \STATE Compress the outputs via the layer encoder $ E_{L_i} $: $ c_k^{L_i} = E_{L_i}(y_k^{L_i}) $.
            \STATE Estimate $ I(E_X(X);E_{L_i}(L_i)) $ and $ I(E_{L_i}(L_i); Y(X)) $, where $ Y $ maps inputs to true targets.
        \ENDFOR
        \STATE Perform one-epoch training step of the network.
    \ENDFOR
	\end{algorithmic}
	\label{alg:mi_in_nn}
\end{algorithm}

We use the architecture of the classification network provided in Table \ref{tab:classifier_architecture}.
We train our network with a learning rate of $ 10^{-5} $ using the Nvidia Titan RTX.
We use $ d_X^{\textnormal{latent}} = d_{L_i}^{\textnormal{latent}} = 4 $.
For other hyperparameters, we refer to Section~\ref{app:technical_details} of the Appendix and to the source code%
\ifdefined\ARXIV
~\citep{Butakov_Package_for_information-theoretic}.
\else
.
\fi

The acquired information plane plots are provided in Figure~\ref{fig:information_plane}.
As the direction of the plots with respect to the epoch can be deduced implicitly
(the lower left corner of the plot corresponds to the first epochs),
we color the lines according to the dynamics of the loss function per epoch.
We do this to emphasize one of the key observations:
the first transition from fitting to the compression phase coincides with an acceleration of the loss function decrease.
It is also evident that there is no clear large-scale compression phase.
Moreover, it seems that the number of fitting and compression phases can vary from layer to layer.

\begin{figure}[ht!]
    \centering
    \begin{subfigure}[b]{0.51\textwidth}
        \centering
        \begin{gnuplot}[terminal=tikz, terminaloptions={color size 7.0cm,4.2cm fontscale 0.6}]
            load "./gnuplot/common.gp"
            
            layer_file = "./data/mutual_information/MNIST/results/28-Sep-2023_14:15:25/layer 2.csv"
            metrics_file = "./data/mutual_information/MNIST/results/28-Sep-2023_14:15:25/metrics.csv"
            confidence = 0.95
            
            load "./gnuplot/neural_network_loss.gp"
        \end{gnuplot}
        \vspace{-1.5\baselineskip}
        \caption{Negative log likelihood loss (train data)}
    \end{subfigure}%
    \begin{subfigure}[b]{0.51\textwidth}
        \centering
        \begin{gnuplot}[terminal=tikz, terminaloptions={color size 7.0cm,4.2cm fontscale 0.6}]
            load "./gnuplot/common.gp"
            
            layer_file = "./data/mutual_information/MNIST/results/28-Sep-2023_14:15:25/layer 3.csv"
            metrics_file = "./data/mutual_information/MNIST/results/28-Sep-2023_14:15:25/metrics.csv"
            confidence = 0.95
            
            load "./gnuplot/neural_network.gp"
        \end{gnuplot}
        \vspace{-1.5\baselineskip}
        \caption{$ L_3 $ (convolutional, LeakyReLU)}
    \end{subfigure}
    \begin{subfigure}[b]{0.51\textwidth}
        \centering
        \begin{gnuplot}[terminal=tikz, terminaloptions={color size 7.0cm,4.2cm fontscale 0.6}]
            load "./gnuplot/common.gp"
            
            layer_file = "./data/mutual_information/MNIST/results/28-Sep-2023_14:15:25/layer 4.csv"
            metrics_file = "./data/mutual_information/MNIST/results/28-Sep-2023_14:15:25/metrics.csv"
            confidence = 0.95
            
            load "./gnuplot/neural_network.gp"
        \end{gnuplot}
        \vspace{-1.5\baselineskip}
        \caption{$ L_4 $ (fully-connected, LeakyReLU)}
    \end{subfigure}%
    \begin{subfigure}[b]{0.51\textwidth}
        \centering
        \begin{gnuplot}[terminal=tikz, terminaloptions={color size 7.0cm,4.2cm fontscale 0.6}]
            load "./gnuplot/common.gp"
            
            layer_file = "./data/mutual_information/MNIST/results/28-Sep-2023_14:15:25/layer 5.csv"
            metrics_file = "./data/mutual_information/MNIST/results/28-Sep-2023_14:15:25/metrics.csv"
            confidence = 0.95
            
            load "./gnuplot/neural_network.gp"
        \end{gnuplot}
        \vspace{-1.5\baselineskip}
        \caption{$ L_5 $ (fully-connected, LogSoftMax)}
    \end{subfigure}
    \caption{Information plane plots for the MNIST classifier.
        The lower left parts of the plots (b)-(d) correspond to the first epochs.
        We use 95\% asymptotic CIs for the MI estimates acquired from the compressed data.
        The colormap represents the difference of losses between two consecutive epochs.}
    \label{fig:information_plane}
\end{figure}

\section{Discussion}
\label{sec:discussion}

An information-theoretic approach to explainable artificial intelligence and deep neural network analysis seems promising,
as it is interpretable, robust, and relies on the well-developed information theory.
However, the direct application of information-theoretic analysis still poses some problems.

We have shown that it is possible to apply information analysis to compressed representations of datasets or models' outputs.
To justify our approach, we have acquired several theoretical results regarding mutual information estimation under lossless and lossy compression.
These results suggest that this approach is applicable to real datasets.
Although it has been shown that an arbitrary amount of information can be lost due to compression, the information required for optimal decompression is still preserved.

We have also developed a framework to test conventional mutual information estimators complemented with the proposed lossy compression step.
This framework allows the generation of pairs of high-dimensional datasets with small internal (latent) dimensions and a predefined quantity of mutual information.
The conducted numerical experiments have shown that the proposed method performs well,
especially when the entropy estimation is done via the weighted Kozachenko-Leonenko estimator.
Other methods tend to underestimate or overestimate mutual information.

Finally, an information plane experiment with the MNIST dataset classifier has been carried out.
This experiment has shown that the dynamics of information-theoretic quantities during the training of DNNs are indeed non-trivial.
However, it is not clear whether the original fitting-compression hypothesis holds,
as there is no clear large-scale compression phase after the fitting.
We suggest that there may be several compression/fitting phases during the training of real-scale neural networks.

An interesting observation has also been made: the first compression phase coincides with the rapid decrease of the loss function.
It may be concluded that the phases revealed by information plane plots are connected to different regimes of learning
(i.e., accelerated, stationary, or decelerated drop of the loss function).
However, we note that this observation is not the main contribution of our work,
and further investigation has to be carried out in order to support this seeming connection with more evidence and theoretical basis.

\textbf{Future work.} As further research, we consider using normalizing flows~\citep{rezende2015normalizing_flows} to improve our approach.
Normalizing flows are invertible smooth mappings that provide means of lossless and information-preserving compression.
They can be used to transform the joint distribution to a Gaussian, thus facilitating mutual information estimation.
Besides, we will apply our method to various large neural networks and perform corresponding information plane analysis.

\newpage

\bibliography{iclr2024_conference}
\bibliographystyle{iclr2024_conference}

\clearpage

\ifdefined\PRINTAPPENDIX
\maketitle

\appendix

\section{Complete proofs}
\label{app:proofs}

\printProofs

\section{Entropy bounds}
\label{app:entropy_bounds}

In this section, we provide several theoretical results that complement the bounds proposed in Section~\ref{subsec:bounds_for_MI_estimate}.
The following inequalities can be used to bound the entropy terms in Statement~\ref{statement:information_bounds_with_denoising}.

\begin{statement}[\citet{cover2006information_theory}, Theorem 8.6.5]
    \label{statement:entropy_Gaussian_upper_bound}
    Let $ X $ be a random vector with covariance matrix $ R $.
    Then $ h(X) \leq h(\normal(0, R)) $.
\end{statement}

\begin{statement}
    \label{statement:entropy_additive_smooting_lower_bound}
    Let $ X,Z \colon \Omega \to \mathbb{R}^n $ be independent random vectors.
    Then $ h(X + Z) \geq h(Z) $.
\end{statement}

\begin{proof}
    Recall that
    \[
        h(X, X + Z) = h(X + Z) + h(X \mid X + Z) = h(X) + h(X + Z \mid X),
    \]
    from which the following is derived:
    \[
        h(X + Z) = h(X) + h(X + Z \mid X) - h(X \mid X + Z)
    \]
    Note that $ h(X + Z \mid X) = \expect_X h(x + Z \mid X = x) = h(Z \mid X) $.
    As $ X $ and $ Z $ are independent, $ h(Z \mid X) = h(Z) $.
    Thus, we derive the following:
    \[
        h(X + Z) = h(X) + h(Z) - h(X \mid X + Z) = h(Z) + \smash{\underbrace{I(X; X + Z)}_{\geq 0}} \geq h(Z)
    \]
\end{proof}

\begin{statement}
    \label{statement:entropy_matrix_smooting_lower_bound}
    Let $ X \colon \Omega \to \mathbb{R}^{n \times n} $ and $ Z \colon \Omega \to \mathbb{R}^n $ be a random matrix and vector, correspondingly.
    Let $ X $ and $ Z $ be independent.
    Then $ h(X \cdot Z) \geq h(Z) + \expect \left( \ln \left| \det X \right| \right) $.
\end{statement}

\begin{proof}
    Note that $ h(X \cdot Z \mid X) = \expect_X h(x \cdot Z \mid X = x) = h(Z \mid X) + \expect \left( \ln \left| \det X \right| \right) $.
    The rest of the proof is the same as for Statement~\ref{statement:entropy_additive_smooting_lower_bound}:
    \begin{multline*}
        h(X \cdot Z) = h(X) + h(Z) + \expect \left( \ln \left| \det X \right| \right) - h(X \mid X \cdot Z) = \\
        = h(Z) + \smash{\underbrace{I(X; X \cdot Z)}_{\geq 0}} + \expect \left( \ln\left| \det X \right| \right) \geq h(Z) + \expect \left( \ln\left| \det X \right| \right).
    \end{multline*}
\end{proof}

\begin{corollary}
    \label{corollary:entropy_multiplicative_smooting_lower_bound}
    Let $ X,Z \colon \Omega \to \mathbb{R}^n $ be independent random vectors.
    Then $ h(X \odot Z) \geq h(Z) + \sum_{i=1}^n \expect \left( \ln\left| X_i \right| \right)$,
    where $ \odot $ is an element-wise product.
\end{corollary}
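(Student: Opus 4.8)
The plan is to derive this corollary as a direct specialization of Statement~\ref{statement:entropy_matrix_smooting_lower_bound}, by realizing the element-wise product $X \odot Z$ as an ordinary matrix--vector product. First I would introduce the random diagonal matrix $\tilde X \defeq \diag(X_1, \ldots, X_n) \colon \Omega \to \reals^{n \times n}$, whose diagonal entries are the components of $X$. By construction, $\tilde X \cdot Z = X \odot Z$, so the left-hand sides of the two statements coincide, and it suffices to match the right-hand sides.

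Next I would verify that $\tilde X$ and $Z$ satisfy the hypotheses of Statement~\ref{statement:entropy_matrix_smooting_lower_bound}. Since $\tilde X$ is a deterministic (measurable) function of $X$, and $X$ is independent of $Z$ by assumption, $\tilde X$ is likewise independent of $Z$. The statement then applies and yields $h(X \odot Z) = h(\tilde X \cdot Z) \geq h(Z) + \expect(\ln|\det \tilde X|)$.

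Finally I would evaluate the determinant term. For a diagonal matrix, $\det \tilde X = \prod_{i=1}^n X_i$, hence $\ln|\det \tilde X| = \sum_{i=1}^n \ln|X_i|$, and linearity of expectation gives $\expect(\ln|\det \tilde X|) = \sum_{i=1}^n \expect(\ln|X_i|)$, which is exactly the claimed bound.

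The one place to be careful---rather than a genuine obstacle---is integrability: applying linearity of expectation requires each $\expect(\ln|X_i|)$ to be well-defined, and one should keep in mind that $\ln|X_i|$ tends to $-\infty$ where $X_i = 0$. I expect this to cause no difficulty under the same standing assumptions already needed for the right-hand side of Statement~\ref{statement:entropy_matrix_smooting_lower_bound} to be meaningful, so the corollary follows immediately once the diagonalization step is in place.
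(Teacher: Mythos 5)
Your proposal is correct and matches the paper's own proof: the paper likewise writes $X \odot Z = \diag(X) \cdot Z$, notes $\log|\det \diag(X)| = \sum_{i=1}^n \ln|X_i|$, and applies Statement~\ref{statement:entropy_matrix_smooting_lower_bound}. Your additional remarks on independence of $\diag(X)$ and $Z$ and on integrability of $\ln|X_i|$ are sound elaborations of the same argument.
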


\begin{proof}
    Note that $ X \odot Z = \diag(X) \cdot Z $, and $ \log |\det \diag(X)| = \sum_{i=1}^n \ln\left| X_i \right| $.
    We then apply Statement~\ref{statement:entropy_matrix_smooting_lower_bound}.
\end{proof}
Note that entropy terms in Statements~\ref{statement:entropy_additive_smooting_lower_bound} and \ref{statement:entropy_matrix_smooting_lower_bound} can be conditioned.
The independence requirement should then be replaced by independence under corresponding conditions.

We also note that Statement~\ref{statement:entropy_Gaussian_upper_bound} can utilize autoencoder reconstruction error (via error covariance matrix),
and Statements~\ref{statement:entropy_additive_smooting_lower_bound}, \ref{statement:entropy_matrix_smooting_lower_bound}~--
magnitude of random vector and injected noise, which is of particular use,
as this information is easily accessible in a typical experimental setup.

Practical use cases include using Statement~\ref{statement:entropy_additive_smooting_lower_bound}
when stochasticity is introduced via additive noise (e.g.,~\citet{goldfeld2019estimating_information_flow})
and Corollary~\ref{corollary:entropy_multiplicative_smooting_lower_bound} when stochasticity is introduced
via multiplicative noise (e.g.,~\citet{adilova2023IP_dropout}).

\section{Limitations of previous works}
\label{app:other_papers_limitations}

In this section, we stress the novelty of our contribution to the problem of high-dimensional MI estimation.

In the Introduction, we mention the works of \citet{butakov2021high_dimensional_entropy_estimation} and \citet{kristjan2023smoothed_entropy_PCA}.
The first article focuses on entropy estimation via lossy compression.
The main theoretical result of the paper in question is the following upper bound of random vector entropy:
\begin{statement}[\citet{butakov2021high_dimensional_entropy_estimation}]
    \label{statement:butakov2021high_dimensional_entropy_estimation:upper_bound}
    Let $ X $ be a random vector of dimension $ n $, let $ Z \sim \normal(0, \sigma^2 I_{n'}) $.
    Let $ A = D \circ E $ be an autoencoder of input dimension $ n $ and latent dimension $ n' $.
    Then
    \[
        h(X) \leq h(E(X) + Z) - n' \left( c + \frac{\log(\sigma^2)}{2} \right) + n \left( c + \frac{\log(\Sigma^2)}{2} \right),
    \]  
    where
    \[
        c = \frac{\log(2 \pi e)}{2}, \quad \Sigma^2 = \frac{1}{n} \expect \left[ \| X - D(E(X) + Z) \|^2 \right]
    \]
\end{statement}

This bound takes advantage of the compression aspect,
as it incorporates the reconstruction mean squared error $ \Sigma^2 $.
However, it is important to note several limitations of the proposed bound.
Firstly, this Statement is insufficient to acquire any bound of MI,
as MI is computed via difference of entropy terms (see~\eqref{eq:mutual_information_from_conditional_entropy}
and \eqref{eq:mutual_information_from_joined_entropy}),
so a two-sided bound of entropy is required.
Secondly, this bound is derived in case of additive Gaussian noise being injected into the latent representation of the vector.
It is inapplicable to other cases of stochasticity injection (e.g., noise added to the vector itself) and, moreover, deteriorates when $ \sigma \to 0 $.
That is why we consider this result inapplicable to the task of MI estimation in the current form.

Now, consider the following two-sided bound derived in the work of \citet{kristjan2023smoothed_entropy_PCA}:
\begin{statement}[\citet{kristjan2023smoothed_entropy_PCA}]
    \label{statement:kristjan2023smoothed_entropy_PCA:two-sided_bound}
    Let $ X $ be a random vector of dimension $ n $, let $ Z \sim \normal(0, \sigma^2 I_n) $.
    Let $ E $ be a PCA-projector to a linear manifold of dimension $ n' $ with explained variances denoted by $ \lambda_i $ in the descending order.
    Then\footnote{We believe ``$ + Z $'' in ``$ h(E(X+Z)) $'' to be missing in the original article; counterexample: $ X = const $.}
    \[
        \frac{n - n'}{2} \log \left(2 \pi e \sigma^2 \right) \leq
        h(X+Z) - h(E(X+Z)) \leq
        \frac{n - n'}{2} \log \left(2 \pi e (\lambda_{n' + 1} + \sigma^2) \right)
    \]
\end{statement}

This bound also takes advantage of the compression aspect, as it incorporates the reconstruction mean squared error via $ \lambda_i $.
We also note that, as the bound is two-sided, corresponding bounds of MI estimate under Gaussian convolution and linear compression can be derived:

\begin{corollary}
    \label{corollary:kristjan2023smoothed_entropy_PCA:MI_bounds}
    Under the conditions of Statement~\ref{statement:kristjan2023smoothed_entropy_PCA:two-sided_bound}
    \[
        |I(X+Z;Y) - I(E(X+Z);Y)| \leq \frac{n - n'}{2} \log \left(1 + \frac{\lambda_{n'+1}}{\sigma^2} \right)
    \]
\end{corollary}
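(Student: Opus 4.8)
The plan is to show that the difference $I(X+Z;Y) - I(E(X+Z);Y)$ is non-negative (so the absolute value may be dropped) and then to bound it from above by combining the two one-sided estimates that together make up Statement~\ref{statement:kristjan2023smoothed_entropy_PCA:two-sided_bound}. First I would note that $E$ is an orthogonal projection onto the top-$n'$ principal subspace; writing $P$ for the corresponding projection in $\reals^n$, the pair $\bigl(E(X+Z),\,(I-P)(X+Z)\bigr)$ determines $X+Z$ through an orthogonal change of coordinates, so it carries the same information and the same differential entropy. In particular $E(X+Z)$ is a deterministic function of $X+Z$, so the data processing inequality gives $I(E(X+Z);Y) \le I(X+Z;Y)$, and the quantity inside the absolute value is already non-negative.

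Next I would apply the chain rule for mutual information to the reparametrization above, which yields
\[
    I(X+Z;Y) - I(E(X+Z);Y) = I\bigl((I-P)(X+Z);\,Y \mid E(X+Z)\bigr).
\]
Expanding this conditional mutual information as a difference of conditional entropies, and using that $h(X+Z)-h(E(X+Z)) = h\bigl((I-P)(X+Z)\mid E(X+Z)\bigr)$ by the entropy chain rule, I would reduce the claim to bounding
\[
    h\bigl((I-P)(X+Z)\mid E(X+Z)\bigr) - h\bigl((I-P)(X+Z)\mid E(X+Z),\,Y\bigr).
\]
The first term is exactly the unconditional entropy gap $h(X+Z)-h(E(X+Z))$, which Statement~\ref{statement:kristjan2023smoothed_entropy_PCA:two-sided_bound} bounds above by $\tfrac{n-n'}{2}\log\!\bigl(2\pi e(\lambda_{n'+1}+\sigma^2)\bigr)$.

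For the second term I need a lower bound. Here the key observation is that the discarded coordinates contain the isotropic noise $(I-P)Z$, which is $\normal(0,\sigma^2 I_{n-n'})$ and independent of $(I-P)X$, of $E(X+Z)$, and of $Y$ (orthogonality makes $(I-P)Z$ and $PZ$ independent for a Gaussian, and $Z$ is independent of $(X,Y)$ by hypothesis). Applying the conditioned form of Statement~\ref{statement:entropy_additive_smooting_lower_bound} therefore gives $h\bigl((I-P)(X+Z)\mid E(X+Z),Y\bigr)\ge h\bigl((I-P)Z\bigr)=\tfrac{n-n'}{2}\log(2\pi e\sigma^2)$. Subtracting the two estimates and simplifying $\tfrac{n-n'}{2}\bigl[\log(2\pi e(\lambda_{n'+1}+\sigma^2))-\log(2\pi e\sigma^2)\bigr]$ produces $\tfrac{n-n'}{2}\log\!\bigl(1+\lambda_{n'+1}/\sigma^2\bigr)$, which is the desired bound.

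The main obstacle---and the reason the statement is not an immediate consequence of applying Statement~\ref{statement:kristjan2023smoothed_entropy_PCA:two-sided_bound} twice---is the conditional entropy gap $h(X+Z\mid Y)-h(E(X+Z)\mid Y)$. One cannot feed the conditional law $X\mid Y=y$ into Statement~\ref{statement:kristjan2023smoothed_entropy_PCA:two-sided_bound}, because $E$ is the PCA projector of the \emph{marginal} distribution and need not be the PCA projector of any conditional, so the upper bound featuring $\lambda_{n'+1}$ would be invalid. The resolution is that only a lower bound on this gap is required, and the lower bound $\tfrac{n-n'}{2}\log(2\pi e\sigma^2)$ is distribution-free: it depends on the noise alone and holds for every orthogonal rank-$n'$ projection, hence survives conditioning on $Y$.
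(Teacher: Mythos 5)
Your proof is correct, but it takes a genuinely different route from the one the paper gives for this corollary. The paper's proof stays entirely at the level of unconditional entropies: it writes the difference of the two MI values via \eqref{eq:mutual_information_from_joined_entropy} as $\left[h(X+Z)-h(E(X+Z))\right]-\left[h(X+Z,Y)-h(E(X+Z),Y)\right]$, applies the two-sided bound of Statement~\ref{statement:kristjan2023smoothed_entropy_PCA:two-sided_bound} to each bracket (arguing that the joint bracket is covered because $Y$ can be appended to $X+Z$ as extra coordinates untouched by the noise and the projector), and subtracts the opposite-sided estimates, which yields the symmetric bound and hence the absolute value. You instead use the chain rule $I(X+Z;Y)-I(E(X+Z);Y)=I\bigl((\Id-P)(X+Z);Y\mid E(X+Z)\bigr)$ together with the data processing inequality and a conditioned form of Statement~\ref{statement:entropy_additive_smooting_lower_bound}; this is essentially the argument the paper reserves for Corollary~\ref{corollary:linear_compression_MI_bounds}, where it is phrased through Statement~\ref{statement:information_bounds_with_denoising} with $X'=E(X+Z)$ and $Z'=X+Z-E(X+Z)$. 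The trade-off: the paper's route is shorter but leans on the assertion that Statement~\ref{statement:kristjan2023smoothed_entropy_PCA:two-sided_bound} extends verbatim to the augmented vector with noise on only part of the coordinates, whereas your route avoids any such extension~--- your closing remark correctly identifies why a naive conditional application of the two-sided bound would fail, and the distribution-free lower bound on $h\bigl((\Id-P)(X+Z)\mid E(X+Z),Y\bigr)$ is exactly the right fix. As a bonus you obtain non-negativity of the difference, i.e.\ the stronger one-sided statement of Corollary~\ref{corollary:linear_compression_MI_bounds}, of which the present corollary is a weaker consequence.
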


\begin{proof}
    From the~\eqref{eq:mutual_information_from_joined_entropy} we acquire
    \begin{multline*}
        I(X+Z;Y) - I(E(X+Z);Y) = \\
        = \left[ h(X+Z) - h(E(X+Z)) \right] - \left[ h(X+Z, Y) - h(E(X+Z), Y) \right]
    \end{multline*}
    Bounds from Statement~\ref{statement:kristjan2023smoothed_entropy_PCA:two-sided_bound} can be applied to the joint entropy,
    as $ Y $ can be viewed as additional components of the vector $ X $,
    unaffected by smoothing and compression.

    To acquire the upper bound of difference of MI terms, we apply the upper bound from Statement~\ref{statement:kristjan2023smoothed_entropy_PCA:two-sided_bound} to the first difference of entropy terms and the lower bound to the second,
    and vice versa in case of the lower bound.
    After simplification, we acquire the desired formula.
\end{proof}

Corollary~\ref{corollary:kristjan2023smoothed_entropy_PCA:MI_bounds} provides a useful result,
as (a) the difference between the true MI and MI under compression is bounded,
(b) the bound converges to zero as $ \lambda_{n' + 1} \to 0 $ (which corresponds to the lossless compression).
However, as the authors of the original paper mention, this bound deteriorates when $ \sigma \to 0 $,
which coincides with Statement~\ref{statement:encoding_makes_MI_zero}.

The linearity of the encoder $ E $ is another limitation we have to mention.
Although the possibility of extension to nonlinear dimensionality reduction approaches is mentioned in the paper,
it is unclear if the derived bounds could be directly transferred to the nonlinear case.
To accomplish this, one has to propose a nonlinear generalization of explained variance and provide
a more general analysis of entropy alternation via discarding nonlinear components.
We also perform tests with synthetic data to show that autoencoder-based approach outperforms PCA-based
in case of nonlinear manifolds (see Figure~\ref{fig:compare_AE_PCA}).
Although the gap is relatively small for the datasets we used in Section~\ref{sec:synthetic},
it is possible to provide an example of a highly nonlinear manifold,
in which case the linear compression is very lossy (see Figure~\ref{fig:compare_AE_PCA:highly_nonlinear},
the synthetic data generator is provided in the source code).

\begin{figure}
    \centering
    \fontsize{7}{14}\selectfont
    \begin{subfigure}[b]{0.3\textwidth}
        \centering
        \begin{gnuplot}[terminal=epslatex, terminaloptions={color dashed size 4.7cm,3.5cm fontscale 0.5}]
            load "./gnuplot/common.gp"
    
            compressed_file_path = "./data/mutual_information/synthetic/gaussian_32x32/compressed/KL_5__5000_2_2__16-Nov-2022_12_20_45.csv"
            PCA_file_path = "./data/mutual_information/synthetic/gaussian_32x32/compressed/PCA/KL_5__5000_2_2__26-Sep-2023_11:45:41.csv"
            
            confidence = 0.95
    
            load "./gnuplot/estmi_mi_PCA.gp"
        \end{gnuplot}
        \vspace{-0.6cm}
        \caption{$ 32 \times 32 $ images of 2D Gaussians ($ n' = m' = 2 $)}
    \end{subfigure}%
    \hfill%
    \begin{subfigure}[b]{0.3\textwidth}
        \centering
        \begin{gnuplot}[terminal=epslatex, terminaloptions={color dashed size 4.7cm,3.5cm fontscale 0.5}]
            load "./gnuplot/common.gp"
    
            compressed_file_path = "./data/mutual_information/synthetic/rectangles_32x32/compressed/KL_5__5000_4_4__24-Aug-2022_06_43_17.csv"
            PCA_file_path = "./data/mutual_information/synthetic/rectangles_32x32/compressed/PCA/KL_5__5000_4_4__26-Sep-2023_11:57:08.csv"
            
            confidence = 0.95
    
            load "./gnuplot/estmi_mi_PCA.gp"
        \end{gnuplot}
        \vspace{-0.6cm}
        \caption{$ 32 \times 32 $ images of rectangles ($ n' = m' = 4 $)}
    \end{subfigure}%
    \hfill%
    \begin{subfigure}[b]{0.3\textwidth}
        \centering
        \begin{gnuplot}[terminal=epslatex, terminaloptions={color dashed size 4.7cm,3.5cm fontscale 0.5}]
            load "./gnuplot/common.gp"
    
            compressed_file_path = "./data/mutual_information/synthetic/anti_PCA_32/compressed/KL_5__5000_1_1__26-Sep-2023_22:12:07.csv"
            PCA_file_path = "./data/mutual_information/synthetic/anti_PCA_32/compressed/PCA/KL_5__5000_1_1__26-Sep-2023_22:07:32.csv"
            
            confidence = 0.95
    
            load "./gnuplot/estmi_mi_PCA.gp"
        \end{gnuplot}
        \vspace{-0.6cm}
        \caption{Highly-nonlinear manifold in $ \mathbb{R}^{32} $ ($ n' = m' = 2 $)}
        \label{fig:compare_AE_PCA:highly_nonlinear}
    \end{subfigure}%
    \caption{Comparison of nonlinear AE and linear PCA performance in task of MI estimation via lossy compression, $ 5 \cdot 10^3 $ samples. Along $ x $ axes is $ I(X;Y) $, along $ y $ axes is $ \hat I(X;Y) $. WKL entropy estimator is used}
    \label{fig:compare_AE_PCA}
\end{figure}

Finally, we note that similar, or even tighter bounds can be derived from the theoretical results of our work.

\begin{corollary}
    \label{corollary:linear_compression_MI_bounds}
    Under the conditions of Statement~\ref{statement:kristjan2023smoothed_entropy_PCA:two-sided_bound} the following inequalities hold:
    \[
        0 \leq I(X+Z;Y) - I(E(X+Z);Y) \leq \frac{n - n'}{2} \log \left(1 + \frac{\lambda_{n'+1}}{\sigma^2} \right)
    \]
\end{corollary}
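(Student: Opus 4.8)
The plan is to establish the two inequalities separately; only the lower bound carries genuinely new content, since the upper bound is nothing more than one side of the two-sided estimate already proven in Corollary~\ref{corollary:kristjan2023smoothed_entropy_PCA:MI_bounds}.

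First I would dispatch the upper bound. Corollary~\ref{corollary:kristjan2023smoothed_entropy_PCA:MI_bounds} gives
\[
    |I(X+Z;Y) - I(E(X+Z);Y)| \leq \frac{n - n'}{2} \log \left(1 + \frac{\lambda_{n'+1}}{\sigma^2} \right),
\]
and simply dropping the absolute value yields the required right-hand inequality at once, so no further work is needed there.

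For the lower bound I would appeal to the data processing inequality~\citep{cover2006information_theory}, exactly as in the first step of the proof of Statement~\ref{statement:information_bounds_with_denoising}. Since $E$ is a fixed PCA projector, it is a deterministic (linear, hence measurable) map, so $E(X+Z)$ is a function of $X+Z$ alone. Equivalently, $Y \longrightarrow (X+Z) \longrightarrow E(X+Z)$ is a Markov chain: conditioned on $X+Z$, the compressed representation retains no residual dependence on $Y$. Writing $W \defeq X+Z$ and applying the data processing inequality to the map $E$ acting on $W$ gives $I(E(W);Y) \leq I(W;Y)$, which is precisely $0 \leq I(X+Z;Y) - I(E(X+Z);Y)$. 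Chaining this with the upper bound completes the argument.

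I do not anticipate a substantive obstacle: both steps are elementary. The only point requiring care is the direction of the data processing inequality — compressing the (already noisy) data can only reduce, never increase, its mutual information with $Y$ — which fixes the sign and thereby sharpens the earlier absolute-value estimate into the stronger one-sided form at no additional cost. In particular, one should verify that there is no circularity: the nonnegativity of the difference is obtained independently from the data processing inequality, not assumed from the prior corollary.
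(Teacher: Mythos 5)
Your argument is logically sound: the lower bound via the data processing inequality is exactly right (and is in essence how the paper obtains it too, since the first inequality of Statement~\ref{statement:information_bounds_with_denoising} is itself a DPI argument), and the upper bound does follow by dropping the absolute value in Corollary~\ref{corollary:kristjan2023smoothed_entropy_PCA:MI_bounds}. The route differs from the paper's, however, in a way that matters for the purpose of the corollary: the sentence introducing it (``similar, or even tighter bounds can be derived from the theoretical results of our work'') signals that the point is to re-derive the bound from the paper's own machinery, independently of the external two-sided entropy estimate of Statement~\ref{statement:kristjan2023smoothed_entropy_PCA:two-sided_bound}. The paper therefore decomposes $X+Z$ into orthogonal components $X' = E(X+Z)$ and $Z' = X+Z-E(X+Z)$, applies Statement~\ref{statement:information_bounds_with_denoising} with $f = E$, $g = \Id$ to obtain $I(X';Y) \leq I(X',Z';Y) = I(X+Z;Y) \leq I(X';Y) + h(Z') - h(Z' \mid X', Y)$, and then bounds $h(Z')$ from above by the Gaussian maximum-entropy bound of Statement~\ref{statement:entropy_Gaussian_upper_bound} (each of the $n-n'$ discarded directions has variance at most $\lambda_{n'+1}+\sigma^2$) and $h(Z'\mid X',Y)$ from below by Statement~\ref{statement:entropy_additive_smooting_lower_bound} (the residual contains an independent Gaussian component of variance $\sigma^2$ in each discarded direction). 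Your version is shorter, but it inherits the upper bound from the external result rather than showing that Statement~\ref{statement:information_bounds_with_denoising} alone suffices; the paper's version buys self-containedness and demonstrates that the sharpened one-sided form falls out of its own framework rather than merely being a sign determination appended to the earlier absolute-value estimate.
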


\begin{proof}
    To avoid notation conflicts, we denote $ X $ and $ Z $ used in Statement~\ref{statement:information_bounds_with_denoising}
    as $ X' $ and $ Z' $ correspondingly.
    In order to simplify the following analysis, we consider $ E \colon \reals^n \to \reals^n $
    ($ \range E \subseteq \reals^{n'} $ is embedded in $ \reals^n $)
    and $ E \circ E = E $, $ E^T = E $ (PCA is used in the form of an orthogonal projector).
    We then choose $ f = E $, $ g = \textnormal{Id} $, $ X' = E(X + Z) $, $ Z' = X + Z - E(X + Z) $
    (so $ Z' \perp X' $), which yields the following inequalities:
    \[
        I(X'; Y) \leq I\left(X', Z'; Y \right) \leq I(X'; Y) + h(Z') - h(Z' \mid X', Y)
    \]

    We then utilize Statements~\ref{statement:entropy_Gaussian_upper_bound} and \ref{statement:entropy_additive_smooting_lower_bound} to bound the entropy terms:
    \[
        h(Z') \leq \frac{n - n'}{2} \log \left(2 \pi e (\lambda_{n' + 1} + \sigma^2) \right),
        \qquad
        h(Z' \mid X', Y) \geq \frac{n - n'}{2} \log \left(2 \pi e \sigma^2 \right)
    \]
    Thus, the following result is acquired:
    \[
        I(X'; Y) \leq I\left(X', Z'; Y \right) \leq I(X'; Y) + \frac{n - n'}{2} \log \left(1 + \frac{\lambda_{n'+1}}{\sigma^2} \right)
    \]
    Now recall that $ X' = E(X + Z) $, $ (X', Z') \sim X' + Z' = X + Z $.
    This yields the final result.
\end{proof}

\section{Limitations of other estimators}
\label{app:estimators_limitations}

In this section, we provide a brief overview of modern entropy and MI estimators that achieve a relative practical success in dealing with the curse of dimensionality.
We provide reasoning why we choose MINE~\citep{belghazi2018mine} as the only modern MI estimator among the mentioned in the Introduction to compare our results with.

\begin{itemize}
    \item \textbf{MINE} is widely considered as a good benchmark estimator and featured in several recent works~\citep{poole2019on_variational_bounds_MI, jonsson2020convergence_DNN_MI, mcallester2020limitations_MI, mroueh2021improved_MI_estimation}.
    As MINE is a neural estimator, it is theoretically able to grasp latent structure of data, thus performing compression implicitly.
    
    \item \textbf{Other lower/upper bounds and surrogate objectives.}
    \begin{itemize}
        \item According to~\citet{mcallester2020limitations_MI}, not many methods in question outperform MINE. In fact, among the other methods mentioned in~\citet{mcallester2020limitations_MI}, only the difference of entropies (DoE) estimator achieves good results during a standard correlated Gaussians test. Unfortunately, DoE requires good parametrized and differentiable (by parameters) estimates of two PDFs, which is difficult to achieve in the case of complex multidimensional distributions.
        
        \item According to an another overview~\citep{poole2019on_variational_bounds_MI}, the methods in question have various significant trade-offs. Some of them require parts of the original distribution (like $ \rho_{Y \mid X} $) or even some information-theoretic quantities (like $ h(X) $) to be tractable, which is not achievable without utilizing special kinds of stochastic NNs. The others heavily rely on fitting a critic function to partially reproduce the original distribution, which leads to a poor bias-variance trade-off (it is illustrated by the results of these estimators in a standard correlated Gaussians test, see Figure 2 in~\citet{poole2019on_variational_bounds_MI}).

        \item Compared to autoencoders, critic networks in methods in question are usually unstable and hard to train, see experiments in~\citet{poole2019on_variational_bounds_MI, mcallester2020limitations_MI}.
        We also have witnessed this instability while conducting experiments with MINE, see the attached source code.
    \end{itemize}

    We, however, note that all these methods are of great use for building information-theoretic training objectives (as they are differentiable and usually represent upper or lower bounds).

    In addition to the limitations mentioned above,
    we would like to note that the relative practical success of the modern NN-based MI estimators
    might be attributed to the data compression being performed implicitly.

    In the work of~\citet{poole2019on_variational_bounds_MI} it has been shown that
    other complex parametric NN-based estimators (NJW, JS, InfoNCE, etc.)
    exhibit poor performance during the estimation of MI between a pair of $ 20 $-dimensional incompressible
    (i.e., not lying along a manifold) synthetic vectors.
    These vectors, however, are of much simpler structure than the synthetic datasets used in our work
    (Gaussian vectors and $ x \mapsto x^3 $ mapping applied to Gaussian vectors in~\citet{poole2019on_variational_bounds_MI}
    versus high-dimensional images of geometric shapes and functions in our work).
    We interpret this phenomenon as a practical manifestation of the universal problem of MI estimation,
    which also affects the performance of modern NN-based MI estimators in the case of hard-to-compress data.

    \item \textbf{EDGE}~\citep{noshad2019EDGE} is a generalized version of the original binning estimator proposed in~\citet{shwartz_ziv2017opening_black_box}: the binning operation is replaced by a more general hashing. We suppose that this method suffers from the same problems revealed in~\citet{goldfeld2019estimating_information_flow}, unless a special hashing function admitting manifold-like or cluster-like structure of complex high-dimensional data is used.

    \item \textbf{Other definitions of entropy}. We are interested in fitting-compression hypothesis~\citep{tishby2015bottleneck_principle, shwartz_ziv2017opening_black_box} which is formulated for the classical mutual information, so other definitions are not appropriate for this particular task. We also note that the classical theory of information is well-developed and provides rigorous theoretical results (e.g., data processing inequality, which we used to prove Statement~\ref{statement:information_bounds_with_denoising}).

    \item
    We also mention the approach proposed in~\citet{adilova2023IP_dropout}, where $ h(L \mid X) $ is computed via a closed-form formula for Gaussian distribution and Monte-Carlo sampling.
    However, we note the following drawbacks of this method: (a) a closed-form formula is applicable to the entropy estimation only for the first stochastic NN layer, (b) a general-case estimator still has to be utilized to estimate $ h(L) $ (in the work of~\citet{adilova2023IP_dropout}, the plug-in estimator from~\citet{goldfeld2019estimating_information_flow} is used; this estimator also suffers from the curse of dimensionality).
\end{itemize}

\section{Classical entropy estimators}
\label{app:entropy_estimators}

In this section, we provide definitions of conventional entropy estimators used to conduct the experiments, as well as provide proofs that these estimators fail in case of high-dimensional data.

\subsection{Kernel density estimation}
\label{subsec:kernel-density-estimation}

The estimation of the probability density function for codes $ c_k $ in the latent space plays an important role in the proposed method of mutual information estimation.
There are many methods for probability density function estimation (e.g.,~\citet{weglarczyk2018kde_review, kozachenko1987entropy_of_random_vector, berrett2019efficient_knn_entropy_estimation}).
One of the most popular methods for solving this problem is kernel density estimation (KDE).
In this section, we study this method in application to entropy estimation.

Let $ \rho_{X, k}(x) $ be a density estimate at a point $ x $,
which is obtained from the sampling $ \{x_k\}_{k=1}^N $ without the $ k $-th element by KDE with the kernel $K$.
We get the following expression for the density:
\begin{equation}
    \label{eq:entropy:KDE_LOO_density}
    \hat \rho_{b, -k}(x) = \frac{1}{b^n\left(N - 1\right)} \sum_{\substack{l = 0 \\ l \neq k}}^N K \left( \frac{x - x_l}{b} \right)
\end{equation}
Here, $K(x) = \displaystyle \frac{1}{(2 \pi)^{n/2}}\exp\left({-\frac{\|x\|^2}{2}}\right)$ is a standard Gaussian kernel.\footnote{Hereinafter, it is possible to use any kernel with infinite support as $K$, but the Gaussian one is preferable because of its light tails and infinite differentiability.}

The entropy estimate is obtained via Leave-One-Out method.
Densities at each sample $ x_k $ are calculated according to the formula \ref{eq:entropy:KDE_LOO_density}.
\begin{equation}
    \label{eq:LOO_entropy}
    \hat H(X) = \frac{1}{N} \sum_{k = 1}^N \log \hat \rho_{b, -k}(x_k)
\end{equation}

\subsubsection{Maximum-likelihood}
\label{subsubsec:maximum-likelihood}

The optimal bandwidth can be selected in accordance with the minimization of the Kullback-Leibler divergence between the estimated distributions and the empirical one ($ \hat \rho_{\textnormal{emp}}(x) = \frac{1}{N} \sum_{k=1}^N \delta(x - x_k) $).
This is equivalent to selecting the bandwidth as a maximum likelihood estimate:
\begin{equation}
    \label{eq:ML_KDE_bandwidth}
    \hat b = \argmax_b \hat H(X) = \argmax_b \sum_{k = 1}^N \log \hat \rho_{b, -k}(x_k)
\end{equation}

The experiments have shown that this method tends to underestimate mutual information, and the difference increases with an increasing true value of mutual information.

Asymptotic:
\textit{the entropy estimation and bandwidth selection take $ \mathcal{O}\left(n \log n\right) $, compression takes $ \mathcal{O}\left(n\right) $, resulting in a total time complexity of $ \mathcal{O}\left(n \log n\right) $}

\subsubsection{Least Squares Error}
\label{subsubsec:least-squares}

Now let us consider the Least Square Cross Validation method (see~\citet{turlach1999bandwidth_selection, sain1994adaptive_kde}).
In this method, bandwidth selection is based on the minimization of the mean squared error between the exact density and the corresponding kernel density estimate.
We minimize the following expression:
\[
    ISE(b) = \int\limits_{\mathbb{R}^n} \left( \hat \rho_b(x) - \rho(x) \right)^2dx
\]
Here, $ \rho $ is the true probability density function, and $ \hat \rho_b(x) $ is the estimate with the bandwidth $ b $:
\[
    \hat \rho_b(x) = \frac{1}{b^n N} \sum_{k=1}^N K\left(\frac{x-x_k}{b}\right)
\]
Since the true distribution is unknown, we substitute $ \rho $ with $ \hat \rho_{\textnormal{emp}} $.
This leads to the following objective function to be minimized:
\[
    \frac{1}{N^2}\sum\limits_{i = 1}^N \sum\limits_{j = 1}^N J_b(x_i - x_j) - \frac{2}{N}\sum\limits_{i = 1}^N \hat \rho_{b, -i}(x_i),
\]
where
\[
    J_b(\xi) = \int\limits_{\mathbb{R}^n} \frac{1}{b^{2n}} K\left(\frac{x}{b}\right) K\left(\frac{x - \xi}{b}\right)dx,
\]
which can be computed via the Fourier transform.

Asymptotic: \textit{The entropy estimation takes $ \mathcal{O} \left(n \log n\right) $, compression takes $ \mathcal{O}\left(n\right) $, same as KDE ML.
However, the optimal bandwidth selection takes $ \mathcal{O}\left(n^2\right) $ due to the quadratic complexity of the minimized objective.
Therefore, this algorithm has a total time complexity of $ \mathcal{O}\left(n^2\right) $, making KDE LSE asymptotically the slowest algorithm implemented within this research.}

\subsection{Kozachenko-Leonenko}
\label{subsec:Kozachenko-Leonenko}

There is another method of entropy estimation, which was proposed by Kozachenko and Leonenko in~\citet{kozachenko1987entropy_of_random_vector}.
The main feature of this method is that it utilizes $ k $-nearest neighbor density estimation instead of KDE.

\subsubsection{Non-weighted Kozachenko-Leonenko}
\label{subsubsec:non-weighted-Kozachenko-Leonenko}

Let $ \{x_k\}_{k=1}^N \subseteq \mathbb{R}^n $ be the sampling of random vector $ X $.
Let us denote $ \hat{r}(x) = \displaystyle\min_{1 \leq k \leq N} r(x, x_k) $
the distance to the nearest neighbour using the metric $ r $
(by default, $ r $ is Euclidean metric).

According to~\citet{kozachenko1987entropy_of_random_vector}, the density estimation at $x$ is given by:
\[
    \hat{\rho}(x) = \frac{1}{\gamma \cdot \hat{r}(x)^n \cdot c_1(n) \cdot (N - 1)},
\]
where $c_1(n) = \pi^{n / 2} / \, \Gamma(n/2 + 1)$ is a unit $n$-dimensional ball volume
and $ \gamma $ is a constant which makes the entropy estimate unbiased
($\ln \gamma = c_2 \approx 0.5772$ is the Euler constant).

Asymptotic: \textit{the entropy estimation takes $ \mathcal{O}\left(n \log n\right) $, compression takes $ \mathcal{O}\left(n\right) $, resulting in a total time complexity of $ \mathcal{O}\left(n \log n\right) $.}

\subsubsection{Weighted Kozachenko-Leonenko}
\label{subsubsec:weighted-Kozachenko-Leonenko}

The main drawback of the conventional Kozachenko-Leonenko estimator is the bias that occurs in dimensions higher than $ 3 $.
This issue can be addressed by using weighted nearest neighbors estimation.
A modified estimator is proposed in~\citet{berrett2019efficient_knn_entropy_estimation}:
\[
    \hat{H}^w_N = \frac{1}{N}\sum_{i=1}^N \sum_{j=1}^k w_i \log \xi_{(j),i}
\]
where $ w $ is the weight vector, $ \xi_{(j),i} = e^{-\Psi(j)} \cdot c_1(n) \cdot (N-1) \cdot \rho^d_{(j ),i} $,
$\Psi$ denotes the digamma function.
We choose the weight vector $ w = \left(w_1, \dots, w_k\right) $ as follows. For $ k \in \naturals $ let
\begin{align*}
    \mathcal{W}^{(k)} = & \left\{  w\in \reals^k: \sum_{j=1}^k w_j\cdot \frac{\Gamma \left(j+\frac{2\ell}{n}\right)}{\Gamma(j)} =0 \text{ for } \ell = 1, \dots,\left \lfloor \frac{n}{4} \right \rfloor, 
    \right.
    \\
    & \left. \sum_{j=1}^k w_j=1 \text{ and } w_j=0 \text{ if } j \not \in \left\{\left \lfloor \frac{k}{n} \right \rfloor, \left \lfloor \frac{2k}{n} \right \rfloor, \dots, k\right\}
    \right\}
\end{align*}
and let the $ w $ be a vector from $ \mathcal{W}^{(k)} $ with the least $l_2$-norm.

Asymptotic: \textit{the entropy estimation takes $ \mathcal{O}\left(n \log n\right) $, weight selection~-- $ \mathcal{O}\left(k^3\right) = \mathcal{O}\left(1\right) $, compression~-- $ \mathcal{O}\left(n\right) $, resulting in a total time complexity of $ \mathcal{O}\left(n \log n\right) $.}

\subsection{Limitations of classical entropy estimators}
\label{subsec:classical_entropy_estimators_limitations}

Although the entropy estimation is an example of a classical problem,
it is still difficult to acquire estimates for high-dimensional data,
as the estimation requires an exponentially (in dimension) large number of samples
(see~\citet{goldfeld2020convergence_of_SEM_entropy_estimation, mcallester2020limitations_MI}).
As the mutual information estimation is tightly connected to the entropy estimation,
this problem also manifests itself in our task.
Although this difficulty affects every MI estimator,
classical estimators may be assumed to be more prone to the curse of dimensionality,
as they are usually too basic to grasp a manifold-like low-dimensional structure of high-dimensional data.

In this section, we provide experimental proofs of classical estimators' inability to yield correct MI estimates in the high-dimensional case.
We utilize the same tests with images of 2D Gaussians used in Section~\ref{sec:comparison} Figure~\ref{fig:compare_methods_gauss}.
However, due to computational reasons, the size of the images is reduced to $ 4 \times 4 $ and $ 8 \times 8 $
(so the data is of even smaller dimension compared to Section~\ref{sec:comparison}).
The results are presented int Table~\ref{tab:classical_methods_error}.
For a comparison we also provide the results for WKL estimator fed with the PCA-compressed data.

\begin{table}[ht!]
    \center
    \caption{MSE (in nats) of classical MI estimation methods ran on $ 5 \cdot 10^3 $ images of 2D Gaussians.
    The character ``--'' denotes cases, in which the method failed to work due computational reasons
    (numerical overflows, ill-conditioned matrices, etc.).}
    \label{tab:classical_methods_error}
    \begin{tabular}{cccccc}
    \toprule
    Images size & KDE ML & KDE LSE & KL & WKL & WKL, PCA-compressed \\
    \midrule
    $ 4 \times 4 $ & $ 4.1 \cdot 10^3 $ & $ 1,95 \cdot 10^1 $ & $ 9,7 $ & $ 2,9 \cdot 10^1 $ & $ 1,87 $  \\
    $ 8 \times 8 $ & $ 2.8 \cdot 10^3 $ & -- & $ 6,6 \cdot 10^1 $ & $ 7,5 \cdot 10^1 $ & $ 1,71 $ \\
    $ 16 \times 16 $ & -- & -- & -- & -- & $ 0,67 $ \\
    \bottomrule
    \end{tabular}
\end{table}

Note that although WKL estimator performs better
in Section~\ref{sec:comparison}
due to lower bias,
it is outperformed by the original KL estimator in the case of uncompressed data due to lower variance.
However, this observation is not of great importance,
as all the four methods perform poorly in the case of $ 8 \times 8 $ images and bigger.

\section{Technical details}
\label{app:technical_details}

In this section, we describe the technical details of our experimental setup:
architecture of the neural networks, hyperparameters, etc.

For the tests described in Section~\ref{sec:comparison}, we use architectures listed in Table~\ref{tab:synthetic_architecture}.
The autoencoders are trained via Adam~\citep{kingma2017adam} optimizer
on $ 5 \cdot 10^3 $ images with a batch size $ 5 \cdot 10^3 $,
a learning rate $ 10^{-3} $ and MAE loss for $ 2 \cdot 10^3 $ epochs.
The MINE critic network is trained via Adam optimizer
on $ 5 \cdot 10^3 $ images with a batch size $ 512 $,
a learning rate $ 10^{-3} $ for $ 5 \cdot 10^3 $ epochs.

\begin{table}[ht!]
    \center
    \caption{The NN architectures used to conduct the synthetic tests in Section~\ref{sec:comparison}.}
    \label{tab:synthetic_architecture}
    \begin{tabular}{cc}
    \toprule
    NN & Architecture \\
    \midrule
    \makecell{AEs,\\ $ 16 \times 16 $ ($ 32 \times 32 $) \\ images} &
        \small
        \begin{tabular}{rl}
            $ \times 1 $: & Conv2d(1, 4, ks=3), BatchNorm2d, LeakyReLU(0.2), MaxPool2d(2) \\
            $ \times 1 $: & Conv2d(4, 8, ks=3), BatchNorm2d, LeakyReLU(0.2), MaxPool2d(2) \\
            $ \times 2(3) $: & Conv2d(8, 8, ks=3), BatchNorm2d, LeakyReLU(0.2), MaxPool2d(2) \\
            $ \times 1 $: & Dense(8, dim), Tanh, Dense(dim, 8), LeakyReLU(0.2) \\
            $ \times 2(3) $: & Upsample(2), Conv2d(8, 8, ks=3), BatchNorm2d, LeakyReLU(0.2) \\
            $ \times 1 $: & Upsample(2), Conv2d(8, 4, ks=3), BatchNorm2d, LeakyReLU(0.2) \\
            $ \times 1 $: & Conv2d(4, 1, ks=3), BatchNorm2d, LeakyReLU(0.2) \\
        \end{tabular} \\
        \\
    \makecell{MINE, critic NN, \\ $ 16 \times 16 $ ($ 32 \times 32 $) \\ images} &
        \small
        \begin{tabular}{rl}
            $ \times 1 $: & [Conv2d(1, 8, ks=3), MaxPool2d(2), LeakyReLU(0.01)]$^{ \times 2 \; \text{in parallel}} $ \\
            $ \times 1(2) $: & [Conv2d(8, 8, ks=3), MaxPool2d(2), LeakyReLU(0.01)]$^{ \times 2 \; \text{in parallel}} $ \\
            $ \times 1 $: & Dense(128, 100), LeakyReLU(0.01) \\
            $ \times 1 $: & Dense(100, 100), LeakyReLU(0.01) \\
            $ \times 1 $: & Dense(100, 1) \\
        \end{tabular} \\
    \bottomrule
    \end{tabular}
\end{table}

For the experiments described in Section~\ref{sec:neural}, we use architectures listed in Table~\ref{tab:neural}.
The input data autoencoder is trained via Adam optimizer
on $ 5 \cdot 10^4 $ images with a batch size $ 1024 $,
a learning rate $ 10^{-3} $ and MAE loss for $ 2 \cdot 10^2 $ epochs;
the latent dimension equals $ 4 $.
The convolutional classifier is trained via Adam optimizer
on $ 5 \cdot 10^4 $ images with a batch size $ 1024 $,
a learning rate $ 10^{-5} $ and NLL loss for $ 300 $ epochs.
The noise-to-signal ratio used for the Gaussian dropout is $ 10^{-3} $.
Outputs of the layers are compressed via PCA into $ 4 $-dimensional vectors.
Mutual information is estimated via WKL estimator with $ 5 $ nearest neighbours.

\begin{table}[ht!]
    \center
    \caption{The NN architectures used to conduct the information plane experiments in Section~\ref{sec:neural}.}
    \label{tab:neural}
    \begin{tabular}{cc}
    \toprule
    NN & Architecture \\
    \midrule
    \makecell{Input data AE,\\ $ 24 \times 24 $ images} &
        \small
        \begin{tabular}{rl}
            $ \times 1 $: & Dropout(0.1), Conv2d(1, 8, ks=3), MaxPool2d(2), LeakyReLU(0.01) \\
            $ \times 1 $: & Dropout(0.1), Conv2d(8, 16, ks=3), MaxPool2d(2), LeakyReLU(0.01) \\
            $ \times 1 $: & Dropout(0.1), Conv2d(16, 32, ks=3), MaxPool2d(2), LeakyReLU(0.01) \\
            $ \times 1 $: & Dense(288, 128), LeakyReLU(0.01) \\
            $ \times 1 $: & Dense(128, dim), Sigmoid \\
        \end{tabular} \\
        \\
    CNN classifier &
        \small
        \begin{tabular}{rl}
            $ L_1 $: & Conv2d(1, 8, ks=3), LeakyReLU(0.01) \\
            $ L_2 $: & Conv2d(8, 16, ks=3), LeakyReLU(0.01) \\
            $ L_3 $: & Conv2d(16, 32, ks=3), LeakyReLU(0.01) \\
            $ L_4 $: & Dense(32, 32), LeakyReLU(0.01) \\
            $ L_5 $: & Dense(32, 10), LogSoftMax
        \end{tabular} \\
    \bottomrule
    \end{tabular}
\end{table}

Here we do not define $ f_i $ and $ g_i $
used in the tests with synthetic data,
as these functions smoothly map low-dimensional vectors to high-dimensional images and, thus, are very complex.
A Python implementation of the functions in question is available in the supplementary material,
see the file \texttt{source/source/python/mutinfo/utils/synthetic.py}.

\else
\fi

\end{document}